\documentclass[letterpaper, 10 pt, conference]{ieeeconf}  

\IEEEoverridecommandlockouts                              

\usepackage{bm}
\usepackage{amsmath}
\usepackage{amssymb}

\usepackage{amsthm}
\usepackage{algpseudocode}
\usepackage{algorithm}
\usepackage{bm}
\usepackage{booktabs}
\usepackage{cite}
\usepackage{comment}
\usepackage{graphics}
\usepackage{mathtools}
\usepackage{float}
\usepackage{graphicx}
\usepackage{url}
\usepackage{hyperref}
\hypersetup{colorlinks, linkcolor=blue}
\usepackage[font=small]{caption}
\usepackage{xcolor}
\usepackage{lipsum}

\usepackage{tikz}
\usetikzlibrary{positioning}

\title{\LARGE \bf
Funnel planning and control for manipulators using neural Signed Configuration Distance Functions
$^{*}$
}

\author{Bernhard~Wullt$^{1}$, Christos~K.~Verginis$^{2}$, Per~Mattsson$^{3}$, Mikael~Norrlöf$^{1}$, Thomas~B.~Schön$^{3}$
\thanks{$^{*}$This research was supported by the \emph{Wallenberg AI, Autonomous Systems and Software Program (WASP)} funded by Knut and Alice Wallenberg Foundation.}
\thanks{$^{1}$Bernhard Wullt and Mikael Norrlöf are with ABB Robotics, Sweden {\tt\small name.surname@se.abb.com}}%
\thanks{$^{2}$Christos K. Verginis is with the Division of Signals and Systems, Department of Electrical Engineering, Uppsala University, Uppsala, Sweden. {\tt\small christos.verginis@angstrom.uu.se}}%
\thanks{$^{3}$Per Mattsson and Thomas B. Schön are with the Division of Systems and Control, Department of Information Technology, Uppsala University, Uppsala, Sweden. \tt\small name.surname@it.uu.se} %
}

\begin{document}

\def\nrDOF{n}
\def\nrDOFmain{n_m}
\def\nrDOFwrist{n_w}
\def\timeFinal{t_f}
\def\funnelGain{k_1}
\def\funnelVelocityGain{k_2}
\def\c{\bm{c}}
\def\p{\bm{p}}
\def\q{\bm{q}}
\def\qdot{\dot{\bm{\q}}}
\def\x{\bm{x}}
\def\xdot{\dot{\x}}

\def\vecGrav{\bm{g}}
\def\vecGravEst{\hat{\vecGrav}}

\def\xStart{\x_{\text{s}}}
\def\xGoal{\x_{\text{g}}}

\def\qddot{\ddot{\q}}

\def\dist{\bm{w}}
\def\u{\bm{u}}

\def\matR{\bm{R}}
\def\matT{\bm{T}}
\def\matI{\bm{I}}

\def\matMass{\bm{M}}
\def\matMassEst{\hat{\matMass}}

\def\matCor{\bm{C}}
\def\matCorEst{\hat{\matCor}}
\def\spaceR{\mathbb{R}}
\def\setB{\mathcal{B}}
\def\setC{\mathcal{C}}
\def\setCf{\setC_{\text{f}}}
\def\setCo{\setC_{\text{o}}}

\def\setD{\mathcal{D}}
\def\setFK{\mathcal{FK}}
\def\setO{\mathcal{O}}
\def\setN{\mathcal{N}}
\def\setP{\mathcal{P}}
\def\setU{\mathcal{U}}

\def\setG{\mathcal{G}}
\def\setT{\mathcal{T}}
\def\setW{\mathcal{W}}
\def\setMA{\mathcal{MA}}

\def\setX{\mathcal{X}}

\def\funcD{\phi}
\def\funcSD{\psi}
\def\funcSDmain{\funcSD^m}
\def\funcSDwristLocal{\funcSD^{lw}}
\def\funcSDwrist{\funcSD^{w}}
\def\funcNSD{\funcSD_{\bm{\theta}}}
\def\funcNSDmain{\funcNSD^m}
\def\funcNSDwristLocal{\funcNSD^{\parallel}}

\def\funcNSDwrist{\funcNSD^w}

\def\funcFunnel{\rho}
\def\funcErr{\bm{e}}

\def\funcTraj{\bm{\tau}}

\def\funcCard{\text{card}}
\def\funcHyp{\text{hypot}}
\def\funcDiag{\text{diag}}

\def\intRange{\mathbb{I}}
\def\zeroVec{\bm{0}}

\newcommand{\norm}[1]{\lVert#1\rVert}

\newtheorem{theorem}{Theorem}
\maketitle
\thispagestyle{empty}
\pagestyle{empty}
\begin{abstract}
Robust motion planning and control for manipulators is the problem of guaranteeing safe execution and collision-free motion under disturbances.
The Kinodynamic Motion Planning via Funnel Control (KDF) is a recently proposed planner that comes with robustness guarantees and is particularly attractive for collision-free control as it guarantees closed-loop evolution inside a funnel, i.e., a collision-free region around a nominal trajectory. Despite its attractive theoretical properties, the current approach for computing funnels is computationally demanding, which limits its expressiveness and performance.
We address its current limitations by proposing a novel funnel planning and control solution. Our key tool to enable this is a neural Signed Configuration Distance Function (nSCDF) which allows us to estimate collision-free balls and gradients in a computationally efficient way. We use the nSCDF in our planner to compute a nominal trajectory, where path length and clearance are balanced. To track the trajectory with collision-avoidance guarantees, we design a funnel controller that exploits the geometric information from the nSCDF.
From numerical experiments, we observe that our planner produces nominal paths $2$ times faster, while at the same time obtaining $20 \%$ shorter path length and $5$ times the clearance. Finally, we also observe improved motion performance and robustness properties when using the nSCDF for funnel control.
\end{abstract}
\section{Introduction}
Robust motion planning and control for robot manipulators is the process of designing an algorithm that guides the motion of the arm from a given start to a goal position while guaranteeing safe motion under disturbances.
It is a fundamental problem in robotics, part of any application which requires reliable and safe autonomous operations, e.g., bin picking, space missions.
The problem is challenging, due to the many sources of disturbances that arise when controlling the manipulator, e.g., model-errors and external disturbances. Furthermore, collision-free motion is a major challenge, where a limiting factor is the lack of computationally efficient obstacle representations in the configuration space \cite{configuration_space}. To cope with these difficulties, the standard approach \cite{lavallebook} is to decouple the problem into subproblems. First, a collision-free path \cite{rrt, prm} is produced by making heavy use of collision detection, which is time-scaled \cite{topps_cvx, toppra} and then tracked with a feedback controller~\cite{rob_mod_plan_cont}. However, when the nominal trajectory is tracked, the robot is easily pushed off track due to the disturbances. Since only the nominal path is verified to be collision-free, once the robot deviates from the path, collision may occur, which is not acceptable for safety critical scenarios. The Kinodynamic Motion Planning via Funnel Control (KDF) framework, developed in \cite{kdf}, was introduced to address the lack of robustness guarantees for collision-free control of manipulators. It guarantees closed-loop evolution in a region around a nominal trajectory, i.e., a funnel. However, since the funnel must be collision-free, the planning of the funnel becomes a major challenge. The current approach finds a funnel by using extended free space sampling (EFSS), where it approximately verifies a region through a lot of sampling and collision checking. EFSS is then integrated into a sampling-based planner to find a funnel. Although the approach is successful, it makes heavy use of collision detection and therefore it is computationally expensive. To lower the computational time, the funnel width is typically reduced and post-processing ignored. This is problematic, since it typically results in longer motion and reduced motion performance.
\begin{figure}
    \centering
    \includegraphics[width=0.3\textwidth]{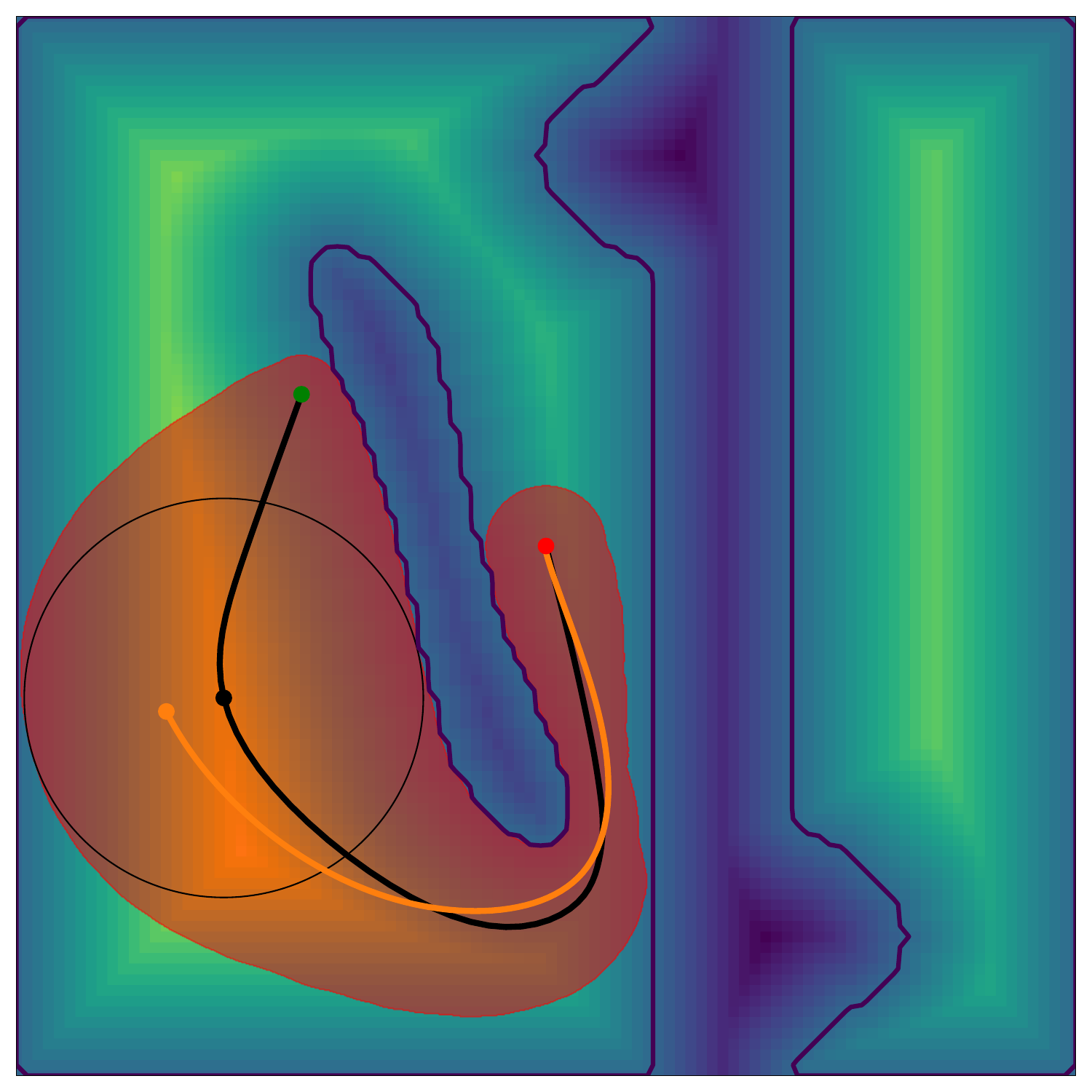}
    \caption{
    Signed distance in the configuration space (brighter colors represent larger distance); the thick purple line (obstacle boundary) is the zero isoline. Our method computes a smooth trajectory (black curve) which balances path length and clearance. To track the trajectory, we develop a funnel controller such that the robot's closed-loop trajectory (orange curve) stays within a funnel defined by the signed distance function (black circle and transparent red region).
    }
    \label{fig:intro}
\end{figure}
\\\\
To overcome the computational bottleneck in KDF, we replace the repeated usage of collision checking with a differentiable distance function, a neural Signed Configuration Distance function (nSCDF) \cite{pbrm}. The nSCDF allows for fast computation of estimates of collision-free regions and gradients. Equipped with this tool, we propose a novel funnel planning and control solution, illustrated in Figure \ref{fig:intro}. Our funnel planner uses the nSCDF to efficiently produce a nominal trajectory, where we balance both path length and clearance. The funnel controller then tracks the trajectory while staying within the funnel as computed by the nSCDF. We summarize our main contributions below.
\begin{itemize}
    \item A novel nSCDF for 6 DOF manipulators that can represent parameterized obstacle geometries.
    \item A funnel planner that uses the nSCDF to produce a nominal trajectory where both path length and clearance is balanced.
    \item A funnel control formulation that exploits the geometric information from the nSCDF to maintain collision-free motion under bounded, unknown disturbances.
\end{itemize}
We open-source our solution for future development:
\begin{center}
{\url{https://github.com/berwul/nSCDF_fc}}.
\end{center}
\section{Related work}
\def\dInPlane{d_{\parallel}}
\def\dNormal{d_{\perp}}
\noindent
\textit{Path planning} for manipulators is done in the configuration space \cite{configuration_space}, i.e., the joint space. The problem is to find a geometrical collision-free path, i.e., ignoring dynamics. The problem is hard, since it typically is high dimensional and lacks expressive collision avoidance representations. The standard approach is to rely on implicit representations through forward kinematics and collision detection~\cite{ericson2004real}. Sampling-based planners~\cite{rob_mod_plan_cont} are a common choice of planners to use for manipulators. The core idea is to grow a graph by iteratively expanding it in collision-free directions, which are collision checked, until the start and goal can be connected. The approach is computationally intensive, yet, effective, being able to scale to higher DOFs. The core planners \cite{rrt, prm} typically result in jerky paths. There are asymptotically optimal sampling-based planners~\cite{rrt_star}, which, however, usually add a considerable amount of computation. A more practical approach is instead to rely on post-processing, e.g., using optimization \cite{schulman2014motion} and shortcutting~\cite{rrt_sc}.
\\
\textit{Kinodynamic motion planning} \cite{kinodynamic} couples the spatial and temporal dimensions in the planning. The sampling-based planning approach allows for kinodynamic motion planning~\cite{rrt_kd} with asymptotically optimal planners \cite{rrt_star_kd, rrt_star_kd_linear}. However, the resulting problem becomes much harder due to, e.g., increased dimensionality and including ODE solvers in the planning. Thus, additionally accounting for disturbances in the planning would make it even more computationally heavy.
\\
\textit{Robustness} is typically not explicitly accounted for in standard motion planning, despite being inherent to the problem. Model predictive control with robustness guarantees have been proposed in \cite{safe_n_fast_rmpc, wullt2026robust}, but a priori knowledge of the disturbances is assumed. 
In contrast to this, KDF~\cite{kdf} makes only minimal assumptions on the disturbances, e.g., boundedness of the disturbances. It decouples motion planning into geometrical planning and control. A key difference compared to standard path planning is that it plans for paths with a user-specified distance margin, i.e., clearance, where the distance margin is expressed in the configuration space. By doing so, a collision-free region, i.e., funnel, can be formed around the resulting path. The motion is then guaranteed to evolve in the region by using funnel control. The strong theoretical guarantees are attractive, but the key problem here is how to efficiently perform the funnel planning.
\\
\textit{Computing collision-free regions} has been actively researched in the last decade \cite{iris_c, IRIS_nlp, cdf, pbrm}. Optimization-based approaches \cite{iris_c, IRIS_nlp} focus on maximizing collision-free volume, but they offer no way to produce a continuous funnel. Learning-based approaches \cite{cdf, pbrm} rely on learning a distance function, from which collision-free balls can be formed. The core data generation process in \cite{cdf} is to project samples on the collision boundary through optimization. However, as the boundary is high dimensional, it requires a lot of samples, making the approach intractable to general obstacle shapes. In contrast, in \cite{pbrm} the data generating approach computes distances from meshes, which does not scale to high-dimensional manipulators.
\section{Notation}
\label{sec:notation}
\def\geom{\bm{\sigma}}
\def\geomWC{\geom_{\text{lw}}}
\def\setGeom{\Sigma}
\def\funcColl{\eta}
\def\setCoBnd{\partial\setCo}
\def\frameWrist{\bm{T}}
\def\setOlw{\setO_{\text{lw}}}
\def\collSign{s}
\def\qSamp{\tilde{\q}}
\def\qSampMain{\tilde{\q}_m}
\def\qSampWrist{\tilde{\q}_w}
\def\dMin{d_{\text{min}}}
\def\NrGridPoints{N}
We denote the set of integers $a$ to $b$, i.e., $\{a, a+1, \hdots, b \}$ by $\intRange_{a:b}$. The sets of non-negative and positive real numbers are denoted by $\spaceR_+$ and $\spaceR_{++}$, respectively. For a vector $\x\in\spaceR^n$, we denote the 2-norm as $\norm{\x}=\sqrt{\x^\top\x}$. We denote the sub-vector, which contains elements $a$ to $b$, as $\x_{a:b}$. The operation of stacking two column vectors is expressed as $(\bm{a}, \bm{b}) = [\bm{a}^\top, \bm{b}^\top]^\top$. The identity matrix is denoted as $\matI$ and the zero vector is denoted as $\zeroVec$. The function $\funcCard(\cdot)$ returns the cardinality of a set. Finally, the function $\funcDiag(\cdot) : \spaceR^n \mapsto \spaceR^{n\times n}$, maps an $n$ dimensional vector into a diagonal matrix.
\section{Problem Formulation}
We limit ourselves to manipulators with six degrees of freedom (DOF). 
Numerous robotic manipulators have 6 DOF, typically divided into a set of main links, comprising 3 DOF, and a wrist consisting of the remaining 3 DOF. The manipulator operates in the world space, $\setW \subset \spaceR^3 $, and the poses of the links are described by a configuration $\q \in \setC$, where $\setC \subset \spaceR^n$ is the configuration space with $\nrDOF=6$. Without loss of generality, we refer to the first three joints, $\q_m \in \spaceR^3$, as \textit{main joints}, with the corresponding links as \textit{main links}, and the remaining joints, $\q_w \in \spaceR^{3}$ as \textit{wrist joints}, with the corresponding links as \textit{wrist links}. We define the wrist frame $\{w\}$ as the interfacing frame between the main and wrist links. The pose of the wrist frame is a function of the main joints, which we express as $\matT(\q_m)$, where $\matT \in \spaceR^{4\times4}$ is a homogeneous transformation matrix.
\\\\
We define the state of the system as $\x=(\q, \qdot) \in \setC \times \spaceR^\nrDOF = \setX$, where $\setX$ is the state space. We access the configuration of the state by the notation $\q(\x)$. The state evolves according to the second-order dynamics
\begin{align}
    \label{eq:dyn_snd}
    \matMass(\q)\qddot + \matCor(\q,\qdot)\qdot + \vecGrav(\q) + \dist(\q,\qdot,t) = \u,
\end{align}
where $\u\in\spaceR^n$ is the robot's control input, $\matMass:\setC \mapsto \spaceR^{n \times n}$ is the mass matrix function, $\matCor:\setC \times \spaceR^n \mapsto \spaceR^{n \times n}$ is the Coriolis matrix function and $\vecGrav:\setC \mapsto \spaceR^n$ is the gravity function. The function $\dist:\setC \times \spaceR^n \times \spaceR_+ \mapsto \spaceR^n$ represents model uncertainties and external disturbances.
We assume that $\matMass$, $\matCor$, and $\vecGrav$ are locally Lipschitz mappings in their arguments, while $\dist$ is locally Lipschitz in $\q$, $\qdot$ and uniformly bounded in $t$. We consider that the functions $\matMass$, $\matCor$, $\vecGrav$ are uncertain, i.e., only estimates $\matMassEst$, $\matCorEst$, $\hat{\vecGrav}$ are available (e.g., via estimates of geometric and dynamic parameters), whereas $\dist$ is entirely unknown. We note that the collision-free guarantees of the proposed algorithm are independent of the system dynamics and valid even without such estimates, i.e., $\matMassEst = \matI$, $\matCorEst = \hat{\vecGrav} = \zeroVec$. 
\\\\
\def\NrObs{K}
The robot is surrounded by $\NrObs$ obstacles $\setO = \bigcup_{i=1}^{\NrObs} \setO_i \subset \setW$. The set of points covered by the robot bodies in the configuration $\q$ is $\setFK(\q) \subset \setW$. The configuration space representation of the $i$-th obstacle is $\setC\setO_i = \{\q \in \setC \: |\: \setFK(\q) \cap \setO_i \neq \emptyset \}$. The obstacle space is formed as $\setCo = \bigcup_{i=1}^{\NrObs} \setC \setO_i$. The complement is referred to as the free space, $\setCf = \setC \setminus \setCo$.
\\\\
\textit{Problem definition: } Given a start state,  $\x(0) = \xStart = (\q_s, \zeroVec) \in \setX$, and a goal state, $\xGoal = (\q_g,\zeroVec) \in \setX$, design a control algorithm, $\u(t)~:~[0, \timeFinal] \mapsto \setU $, where $\timeFinal \in \spaceR_+$ is the execution time, such that the evolution of~\eqref{eq:dyn_snd}, satisfies $\x(t_f)= \x_g$, while being collision-free at all times, $\q(\x(t)) \in \setCf, \: \forall t \in [0, \timeFinal]$.
\section{Signed Configuration Distance Function}
\label{sec:scdf}
In the following, we present a learning-based approach to approximate a high-dimensional signed configuration distance function (SCDF). We start by introducing the SCDF in Section~\ref{sec:scdf:bg}. Having presented its properties, we argue that the derivation of an analytical SCDF is nontrivial. Hence, we employ an approximation via a neural representation, i.e., a neural SCDF (nSCDF), which we present in Section~\ref{sec:scdf:neural_scdf}. With the nSCDF, we address how we can compute the necessary data to form the representation in Section~\ref{sec:scdf:wrist_distances}.
\subsection{Background}
\label{sec:scdf:bg}
A fundamental tool in motion planning is a collision detector, which we define as
\begin{equation}
\funcColl(\q, \: \setO) = 
\begin{cases}
-1 & \text{if } \setFK(\q) \cap \setO \neq \emptyset, \\
\hphantom{-} 1 & \text{otherwise.}
\end{cases}
\end{equation}
It can typically be evaluated fast and reliably, but on the downside, the method does not provide any information regarding the proximity of $\q$ to $\setO$. In contrast to the standard collision detector, we define the SCDF as
\begin{align}
\funcSD(\q, \: \setO) &= 
\begin{cases}
-\funcD(\q, \: \setO) & \text{if } \q \in \setCo, \\
\hphantom{-} \funcD(\q, \: \setO) & \text{otherwise},
\end{cases}
\end{align}
together with
\begin{equation}
  \funcD(\q, \setO) = \min_{\q_b \in \setCoBnd} \: \norm{\q - \q_b}.
\end{equation}
In the above, we denote the boundary of the obstacle region by $\setCoBnd$. In the following, we will occasionally omit the obstacle as input to the SCDF, only adding it where clarity is needed. The SCDF is similar to a standard collision detector as it provides the collision status. However, it also gives us the closest \textit{configuration} distance to the obstacle boundary. We can use this to express a collision-free region. From a configuration $\q_c$, we form a collision-free ball according to
\begin{equation}
    \label{eq:ball}
    \setB(\q) = \{\q_c \in \spaceR^\nrDOF \; | \; \norm{\q_c - \q}  \le \funcSD(\q) \}.
\end{equation}
Furthermore, if the SCDF is a differentiable function, then the gradient allows us to further grow the ball's volume by stepping along its gradient~$\nabla_{\q} \funcSD$. The SCDF is flexible, since it allows us to combine multiple signed distance values by simply taking the smallest distance. How to form an analytical representation of the SCDF is, however, nontrivial. We instead propose to approximate it through machine learning, which we discuss in the next section.
\subsection{Neural representation}
\label{sec:scdf:neural_scdf}
In this section we will show how the SCDF can be approximated using a neural representation, $\funcSD \approx \funcNSD$, where $\funcNSD$ is termed a neural SCDF (nSCDF). Since we use a neural representation, we have to restrict ourselves to geometrical shapes that can be represented by a parameter vector $\geom \in \setGeom$, where $\setGeom$ is the set of geometrical shapes for a particular family of shapes, e.g., spheres, axis-aligned boxes and ellipsoids. With this, the obstacles are formally parameterized as $\setO(\geom)$, but in the following we simply use the shape vector $\geom$ to implicitly represent the obstacle $\setO$. 
\\\\
As presented in the previous section, we can combine multiple signed distances through the min function. We exploit this property and decompose the nSCDF as
\begin{equation}
\label{eq:scdf:min}
\funcNSD(\q, \geom) = \text{min}(\funcNSDmain(\q_m, \geom), \: \funcNSDwrist(\q_m, \q_w, \geom)).
\end{equation}
In the above, the function $\funcNSDmain: \spaceR^3 \times \setGeom \mapsto \spaceR$, is an nSCDF that only considers the collision distance for the main links. We refer to it as the \textit{main} nSCDF. We only require 3~DOF to describe the poses of the main links. Our previous work~\cite{pbrm}, can be used to train a neural representation, since it does not require all the 6~DOF. We now briefly review this method. 
\\\\
The core idea in \cite{pbrm} is the data generation process. For a given shape parameter vector, we compute a mesh that approximates the obstacle boundary by relying on collision detection and a surface reconstruction algorithm, e.g.,~ \cite{marchingcubes}. With the mesh, we can efficiently compute signed distances expressed in the configuration space for arbitrary configurations. To compile a labeled dataset, we simply repeat the previous process for a large number of sampled parameter vectors and configurations, from which we can train a neural network representation. For more details, see~\cite{pbrm}.
\\\\
In \eqref{eq:scdf:min}, the function $\funcNSDwrist: \spaceR^6 \times \setGeom \mapsto \spaceR$ is an nSCDF that only considers the distance of the wrist links and is referred to as a \textit{wrist} nSCDF. To describe the poses of the wrist links, all 6 DOF are required. As such, the previously outlined data generation process \cite{pbrm} will be intractable. The main challenge is how to retrieve signed distances in a computationally efficient way, which we address next.
\subsection{Computing signed distances for the wrist links}
\label{sec:scdf:wrist_distances}
To efficiently compute the signed distances for a high-dimensional obstacle boundary, we propose to use a \textit{slice-based} representation, which we conceptualize in Figure~\ref{fig:method:conceptual} and discuss next.
\begin{figure}[H]
\centering
\begin{tikzpicture}
\node[anchor=south west,inner sep=0] at (0,0) {\includegraphics[height=.15\textheight, trim={0.cm 0.cm 0.cm 0.cm},clip]{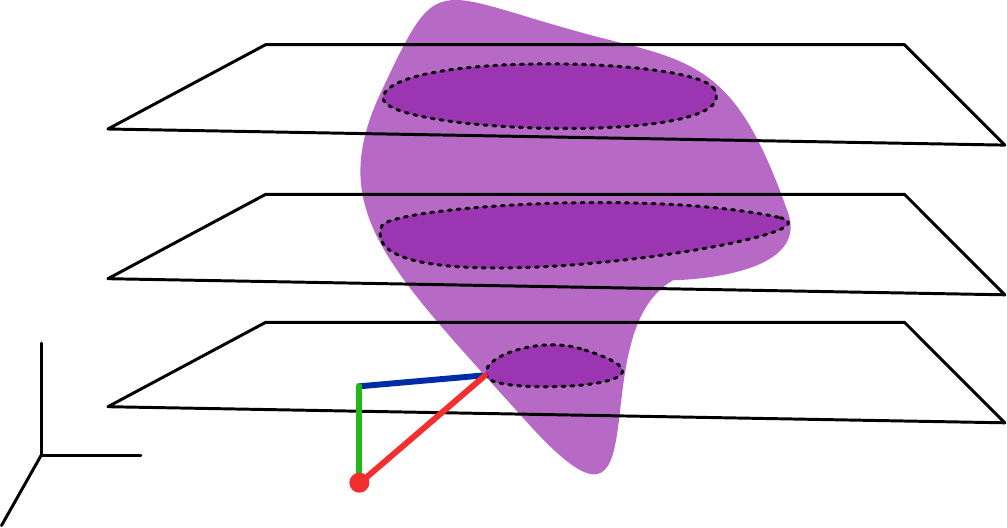}};
\node at (2.2, .6) {$\dNormal$};
\node at (2.5, 1.2) {$\dInPlane$};
\node at (2.9, 0.5) {$d$};
\end{tikzpicture}
\caption{
A conceptual illustration of the representation based on slices. The high-dimensional obstacle region (purple) is discretized into slices (dark purple), by forming the intersection with the hyperplanes (black). From a given query point (red point) we compute the distance to a slice (red line), by combining the normal plane distance (green line) and the in-plane distance to the slice boundary (blue line). The smallest distance from all the slices results in our estimate of the distance to the high-dimensional boundary.
}
\label{fig:method:conceptual}
\end{figure}
\begin{figure*}[t!]
\centering
\begin{tikzpicture}
\node[anchor=south west,inner sep=0] at (0,0) {\includegraphics[height=.35\textwidth, trim={0.25cm 6cm 0.25cm 6cm},clip]{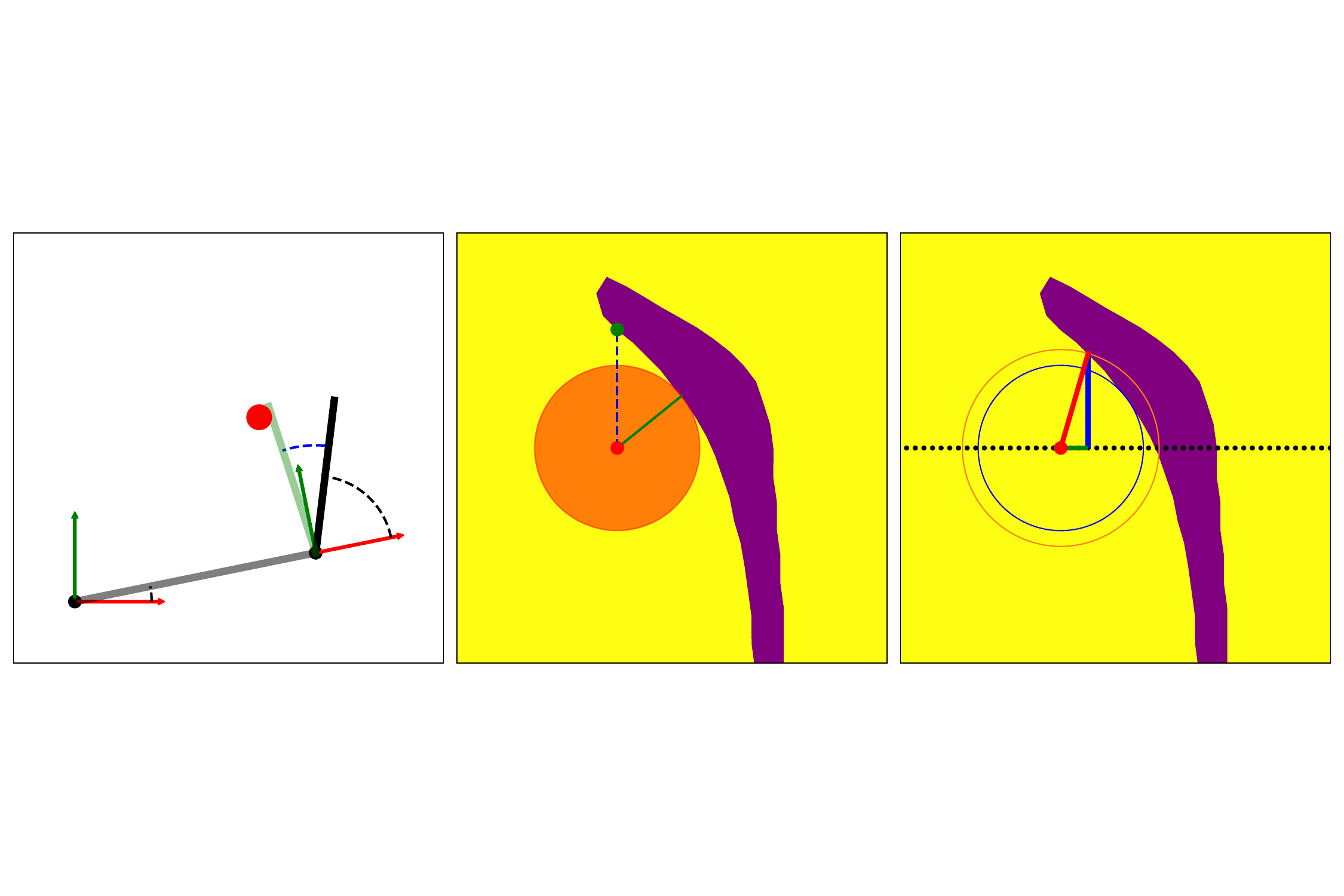}};

\node at (3.9, 3.45) {$\funcNSDwristLocal$};
\node at (0.8, 0.7) {$\{0\}$};
\node at (4.2, 1.5) {$\{w\}$};

\node at (2.3, 1.15) {$q_m$};
\node at (5, 2.5) {$q_w$};

\node at (3.1, 3.2) {$\geom, \geomWC$};

\node at (7.7, 3.8) {$\funcNSDwristLocal$};
\node at (8.7, 3.3) {$\funcSDwrist$};
\node at (7.8, 3) {$\q$};

\def\xOff{0.06}
\def\yOff{0.26}


\def\xxOff{5.85}
\draw[->, very thick] (\xxOff + \xOff, \yOff) -- (\xxOff + \xOff, \yOff + 1);
\draw[->, very thick] (\xxOff + \xOff, \yOff) -- (\xxOff + \xOff + 1, \yOff);

\node at (\xxOff + \xOff + 1.25, \yOff+0.2) {$q_m$};
\node at (\xxOff + \xOff + 0.2, \yOff+1.25) {$q_w$};

\def\xxOff{11.72}

\draw[->, very thick] (\xxOff + \xOff, \yOff) -- (\xxOff + \xOff, \yOff + 1);

\draw[->, very thick] (\xxOff + \xOff, \yOff) -- (\xxOff + \xOff + 1, \yOff);
\node at (\xxOff + \xOff + 1.25, \yOff+0.2) {$q_m$};
\node at (\xxOff + \xOff + 0.2, \yOff+1.25) {$q_w$};

\node at (14.1, 2.9) {$\dNormal$};
\node at (14.5, 3.5) {$\dInPlane$};
\node at (13.8, 3.7) {$d$};
\node at (13.3, 2.55) {$d_{\text{min}}$};
\node at (13.6, 2.9) {$\qSamp$};

\end{tikzpicture}
\caption{
\textbf{All:}
A conceptual illustration of how the wrist configuration distance is computed for a 2 DOF arm, where $q_m$ is the main joint and $q_w$ is the wrist joint. The left plot is illustrated in the world space, the rest is illustrated in the configuration space.
\textbf{Left:}
The manipulator has two links (colored black). The main link is made transparent to emphasize that its collision with the obstacle is not considered. The obstacle is a disk (colored red). Its parameter vector is expressed in the base frame $\{0\}$ as $\geom$ and in the wrist frame $\{w\}$ as $\geomWC$. Computing the distance from the local wrist function results in a wrist movement (dashed blue arc) that moves the wrist to the obstacle boundary (transparent green). The distance or the movement does not include the main joint.
\textbf{Middle:}
Corresponding configuration space representation of the left plot. The configuration (red point); distance computed by the local wrist nSCDF (dashed blue line); the ground truth distance including the movement of the main joint (transparent red disk and green line).
\textbf{Right:}
To compute an estimate of the distance where the movement of the main joint is included, we discretize the space of the main joints (black points along the horizontal axis). For each grid point, we compute a distance, by forming the hypotenuse (red line) with the distance of the main joint (green line) and the wrist distance (blue line). This forms an estimated safe set (orange circle). From all estimates we take the smallest distance (blue circle).
}
\label{fig:method:SCDF_dg}
\end{figure*}
The high-level idea is to approximate the obstacle boundary by intersecting it with $\NrGridPoints$ hyperplanes, resulting in $\NrGridPoints$ slices. To compute a distance $d$ from an arbitrary point to the boundary of the $i$-th slice, we combine the shortest distance to the plane, $d_{\perp}$, and the in-plane distance to the slice boundary, $d_{\parallel}$. To compute the distance to the obstacle boundary, we return the smallest slice distance among the $N$ slices. 
\\\\
The overall idea is simple but assumes that we can efficiently compute the in-plane slice distances. As a first step to enable this, we form the hyperplanes from the subspace spanned by the wrist joints. Thus, translating the hyperplane in the normal direction corresponds to moving the main joints. Since the subspace is reduced in dimensionality, this enables us to use our previous method to learn an nSCDF $\funcNSDwristLocal : \spaceR^3 \times \setGeom \mapsto \spaceR$ for the in-plane distances, illustrated in the left and middle parts of Figure~\ref{fig:method:SCDF_dg}, and it is referred to as a \textit{local} wrist nSCDF. Since the slice direction is along the subspace of the wrist joints, it only considers the manipulation of the wrist links. The obstacle shape is expressed relative to the wrist coordinate system, which we denote as~$\geomWC$. Thus, moving the main joints, i.e., translating the hyperplane in the normal direction, results in a new shape vector and new slice distance.
\\\\
With an efficient representation to compute the in-plane slice distances, we can now compute signed distances to the high-dimensional obstacle boundary of the wrist links. We present our method in Algorithm~\ref{alg:dist} and illustrate the process in the right part of Figure~\ref{fig:method:SCDF_dg}.
\\\\
As input, we require a query configuration $\qSamp=(\qSampMain, \qSampWrist)$ from which a signed distance should be computed, an obstacle shape $\geom$ and a trained local wrist nSCDF $\funcNSDwristLocal$. We start by computing the collision status, $s \in \{-1, 1\}$, of our input query point, line~\ref{alg:dist:collsign}, using a conventional collision checker. Next, we compute the necessary inputs for the slice-based representation in line~\ref{alg:dist:grid1} and \ref{alg:dist:grid2}. That is, we form a uniform grid over the main joints, resulting in $\NrGridPoints$ points, line~\ref{alg:dist:grid1}. The grid points correspond to the offset of the hyperplanes. For each of these grid points, $\q_m^i$, $i \in \intRange_{1:N}$, we compute the pose of the wrist frame, $\matT(\q_m^i)$, and express the position of the obstacle shape in the wrist frame, yielding a local shape vector $\geomWC^i$, line~\ref{alg:dist:grid2}. 
\\\\
We are now ready to compute the requested signed distance, which we illustrate in the right part of Figure~\ref{fig:method:SCDF_dg}. We initialize our estimate of the distance, line~\ref{alg:dist:dmin_start}, and iterate over all the $\NrGridPoints$ grid points, line~\ref{alg:dist:est_start} to~\ref{alg:dist:est_end}. We start by computing the shortest distance to the plane from our given point. Since the hyperplane is oriented along the subspace spanned by the wrist joints, the shortest distance to the hyperplane simply becomes the distance between the main joints, which is
\begin{equation}
    \dNormal = \norm{\qSampMain - \q_m^i}.
\end{equation}
Next, we compute the in-plane distance, line~\ref{alg:dist:wrist_dist}, by utilizing the local wrist nSCDF. We do this according to
\begin{equation}
 \dInPlane = \max(0, \: s \cdot \funcNSDwristLocal(\qSampWrist, \: \geomWC^{i})). 
\end{equation}
In the above expression, we only include wrist distances that have the same collision sign as the query point. The reason for this is that if we move into a region where there is a change in collision status, then the local wrist nSCDF will add extra distance to our estimate, making it too long. As a remedy, we ignore the in-plane distance if there is a sign change, i.e., the product $s \cdot \funcNSDwristLocal(\qSampWrist, \: \geomWC^{i})$ is negative. We then combine the distances, line~\ref{alg:dist:hypot}, and store only the smallest distance, line~\ref{alg:dist:min}. Once we have iterated over all the points, we return the signed distance, line~\ref{alg:dist:return}. 
\\\\
Algorithm~\ref{alg:dist} can be used to compute a labeled dataset, from which we can train a neural network in a supervised manner akin to what was done in~\cite{pbrm}.
\begin{algorithm}[H] 
\centering
\caption{Computes an approximate signed configuration distance from a high-dimensional surface through a sliced-based representation.}
\label{alg:dist}
\begin{algorithmic}[1]
    \Require $\qSamp = (\qSampMain, \qSampWrist)$, $\geom$, $\funcNSDwristLocal$
    \State $s$ $\gets$ compute collision-sign $\funcColl(\qSamp)$ \label{alg:dist:collsign}
    \State $(\q_m^{1}, \hdots, \q_m^{N})$ $\gets$ form grid over main joints \label{alg:dist:grid1}
    \State $(\geomWC^{1}, \hdots, \geomWC^{N})$ $\gets$ compute local shapes \label{alg:dist:grid2}
    \State $\dMin$ $\gets$ $\infty$ \label{alg:dist:dmin_start}
    \For{$i \in \intRange_{1:N}$} \label{alg:dist:est_start}
        \State $\dNormal$ $\gets$ $\norm{\qSampMain - \q_m^{i}}$ \label{alg:dist:main_dist}
        \State $\dInPlane$ $\gets$ $\max(0, \: s \cdot \funcNSDwristLocal(\qSampWrist, \: \geomWC^{i}))$ \label{alg:dist:wrist_dist}
        \State $d \gets \sqrt{\dInPlane^2 + \dNormal^2}$ \label{alg:dist:hypot}
        \State $\dMin$ $\gets$ $\text{min}(\dMin, d)$ \label{alg:dist:min}
    \EndFor \label{alg:dist:est_end}
    \State $\Return$ $s \cdot \dMin$ \label{alg:dist:return}
\end{algorithmic}
\end{algorithm}
\section{Funnel planning}
\def\qStart{\q_{\text{s}}}
\def\qGoal{\q_{\text{g}}}
In the following we propose a computationally efficient method to plan for a collision-free funnel where we balance path length and clearance. The control aspect will be addressed in Section~\ref{sec:funnel_control}. We assume access to a differentiable SCDF $\funcSD$. In the experiments, Section~\ref{sec:exp:FP}, we will use the nSCDF as an approximation of the SCDF. We represent the funnel by a trajectory, $\funcTraj(t) : [0, \timeFinal] \mapsto \setC$ and a funnel function, $\funcFunnel(t) : [0, \timeFinal]\mapsto\spaceR_{++}$. Since we want guarantees on collision-free motion, we use the SCDF as a funnel function $\funcFunnel(t) = \funcSD(\funcTraj(t))$. As commonly done in motion planning, we decouple our trajectory into a path $\funcTraj(s) : [0,1] \mapsto \setC$ and a time-scaling function $s(t):[0, \timeFinal] \mapsto [0, 1]$. Computing a time-scaling function can be done very efficiently \cite{toppra}, while computing the path is nontrivial. Thus, in the following, we focus on the path planning aspect. Our method addresses the following optimization problem
\begin{subequations}
\label{eq:fop}
\begin{align}
\underset{\funcTraj}{\operatorname{min}}
    \quad 
    & 
    \int_{0}^{1} \frac{1}{\funcSD(\funcTraj(s))}
    + 
    \norm{ \funcTraj^{\prime}(s) } 
    \: 
    ds
    \label{eq:fop:obj} \\
    \textrm{s.t.} \quad & \funcTraj(0) = \qStart, \funcTraj(1) = \qGoal, \label{eq:fop:start_goal} \\
    & \funcSD(\funcTraj(s)) > 0, \forall s \in [0, 1]. \label{eq:fop:cf}
\end{align}
\end{subequations}
The objective \eqref{eq:fop:obj} of the above optimization problem is to produce a path that balances clearance and path length, with the requirement that the path should start and end in the given inputs \eqref{eq:fop:start_goal}, and that the path is collision-free~\eqref{eq:fop:cf}.
\\\\
The optimization problem is nonconvex and hard to solve. Faced with this difficulty, we approach it heuristically, dividing the problem into simpler subproblems, which we illustrate in Figure \ref{fig:method:pipeline}. First, we produce a feasible path with respect to~\eqref{eq:fop}, using, e.g., a sampling-based planner, and thereby decide on a certain homotopy class. Next, we optimize its clearance, i.e., the first part of the objective~\eqref{eq:fop}, through a projection step. This is illustrated in the left part of Figure~\ref{fig:method:pipeline} and discussed in Section~\ref{sec:fp:map}. This results in a path with high clearance but is most likely suboptimal with respect to path length. To balance path length, we formulate a weighted shortest path (WSP) problem, discussed in Section~\ref{sec:fp:wsp} and shown in the middle part of Figure \ref{fig:method:pipeline}. However, a natural consequence of this step is that the resulting path becomes nonsmooth. To remedy this, we solve a convex optimization problem, where the path is smoothed and constrained to be close to the original optimized path, discussed in Section \ref{sec:fp:scp} and visualized in the right part of Figure \ref{fig:method:pipeline}. The resulting optimized path is then time-scaled to produce our required trajectory.
\begin{figure*}[t!]
\centering
\includegraphics[height=.35\textwidth, trim={0.25cm 6cm 0.25cm 6cm},clip]{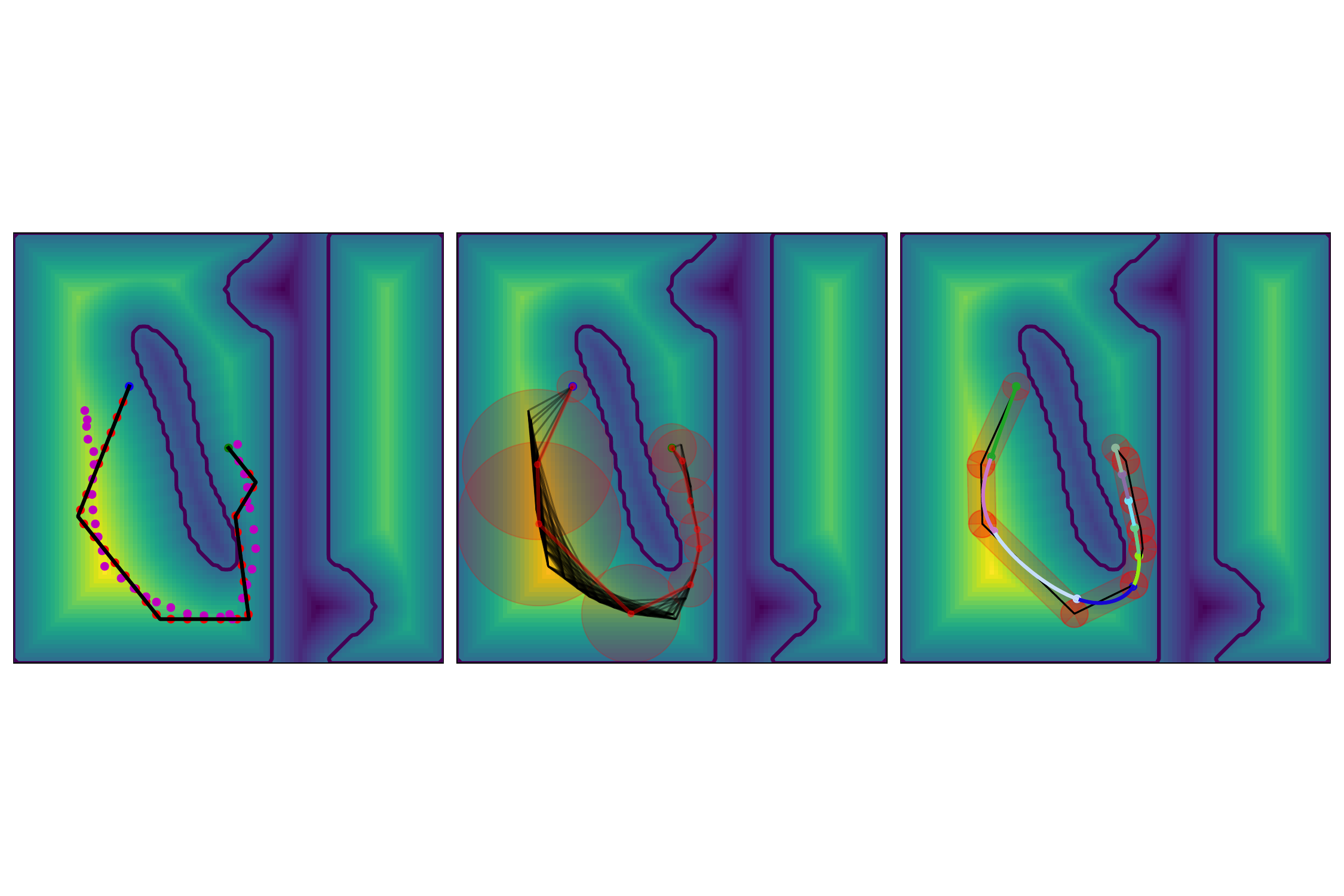}
\caption{ 
Illustrations in the configuration space. The SCDF, $\funcSD$, is illustrated as a colormap, with darker colors indicating lower distances. The zero-level contour, i.e., $\partial \setCo$, is represented by the solid purple curve.
\textbf{Left:}
Illustration of our medial-axis projection step. A discretized path (red points) is projected to points approximately on the medial axis (purple points), resulting in increased clearance.
\textbf{Middle:}
The projection steps result in a nonsmooth and distorted path. We solve a weighted shortest path problem by constructing a graph by using the points on the path as vertices. The edges are formed over intersecting balls (black lines). The weight function balances path length and clearance. Solving the problem results in an optimized path in the graph (red curve), which is continuously collision-free due to the intersection requirement of the resulting balls (red circles).
\textbf{Right:}
We smoothen the piece-wise linear path by solving a convex optimization problem. The key idea is to find a smooth path that is close to the original one (black curve). We do this by constructing collision-free capsules (transparent red) along the connecting line segments of the path. Each capsule is assigned a B\'ezier curve (colored curve within capsule). Continuity constraints are enforced with its neighboring curve. The closeness of the original path is controlled by the width of the capsules. Solving the optimization problem results in a smooth collision-free path.
}
\label{fig:method:pipeline}
\end{figure*}
\subsection{Approximate medial-axis projection}
\label{sec:fp:map}
\def\funcPathNonOpt{\tilde{\bm{r}}}
\def\funcPathMaxClear{\bm{r}}
\def\pathSBP{\text{P}}
\def\pathSBPOpt{\tilde{\text{P}}}
\def\pathSP{\bar{\text{P}}}

\def\qCl{\tilde{\q}}
\def\qSP{\bar{\q}}
\def\tupleGraph{\text{G}}
\def\setV{\mathcal{V}}
\def\setE{\mathcal{E}}
\def\v{\bm{v}}
\def\NrDiscP{m}
\def\NrSPP{m}

\def\tupleCorr{\text{C}}
\def\tupleCorrProj{\tilde{\text{C}}}
\def\tupleCorrWSP{\text{C}^{\star}}

\def\ballCenter{\bm{c}}
\def\ballCenterProj{\tilde{\bm{c}}}
\def\ballCenterWSP{\bm{c}^{\star}}
\def\setBproj{\tilde{\setB}}
\def\NrCorBalls{m}
\def\funcNeigh{\bm{b}}
\def\funcRay{\bm{r}}
We assume the availability of a feasible path, connecting the query points $\qStart$ and $\qGoal$. This path is then discretized into waypoints resulting in the sequence
\begin{equation}
    \tupleCorr = [\ballCenter_1, \hdots, \ballCenter_{\NrCorBalls}].
\end{equation}
Next, we want to increase the clearance of the path. A suitable candidate to achieve this is given by the medial axis, defined as 
\begin{equation}
    \label{eq:medial_axis}
    \setMA = \{ \q  \in \setC \: | \: \funcCard(\setN(\q)) > 1 \},
\end{equation}
where $\setN(\q)$ defines the inscribed boundary points
\begin{equation}
    \setN(\q) = \{ \q_{b} \in \partial\setCo \: | \:  \norm{\q - \q_{b}} = \funcSD(\q) \}.
\end{equation}
Thus, the medial axis corresponds to the locus of points with locally maximal clearance. We project our path onto the medial axis in a greedy fashion. First, we define the closest boundary point function as 
\begin{equation}
    \funcNeigh(\q) = \q -  \funcSD(\q) \nabla_{\q} \funcSD.
\end{equation}
The above function simply computes a boundary point by scaling the negative gradient with the corresponding distance. For each point on the path,~$\ballCenter_i \in \tupleCorr$, we trace a ray
\begin{equation}
    \funcRay(s) = \ballCenter_i + \nabla_{\q} \funcSD \cdot s, \: s\ge0.
\end{equation}
Then, we search for a narrow interval $[s_l, s_u]$ defined as 
\begin{align}
    \begin{aligned}
        & s_l \ge 0, s_u > s_l, s_u - s_l \le \varepsilon, \\
        & \funcNeigh(\ballCenter_i) = \funcNeigh(\funcRay(s_l)) \neq \funcNeigh(\funcRay(s_u)),
    \end{aligned}
\end{align}
where $\varepsilon$ is a small user-defined constant that controls the width of the interval. The above expresses the fact that there is a change in the boundary point at the upper end of the interval. To form an approximate medial axis point, i.e., a point where two boundary points are approximately equidistant, we first compute a narrow interval through bisection. Then we form the center point of the interval, that is
\begin{equation}
    \ballCenterProj_i = \funcRay((s_l+s_u)/2).
\end{equation}
Repeating the process for all points results in a path with increased clearance, which  we denote by
\begin{equation}
    \tupleCorrProj=[\ballCenterProj_1, \hdots, \ballCenterProj_m].
\end{equation}
\subsection{Weighted shortest path problem}
\label{sec:fp:wsp}
\def\funcWeight{\kappa}
The previous step results in a path with increased clearance, but it potentially includes redundant motion. To handle this, we formulate a WSP problem that balances both clearance and path length. We derive a weighted graph, $\tupleGraph=(\setV, \setE, \funcWeight)$, where the vertices, $\setV$ are formed from the projected points, $\setV=\{\ballCenterProj_i \}_{i=1}^{m}$. Two vertices are connected by an edge if their balls \eqref{eq:ball} intersect, that is
\begin{equation}
\setE = \{ (\ballCenterProj_i, \ballCenterProj_j) \in \setV^2 \: | \: \setB(\ballCenterProj_i) \cap \setB(\ballCenterProj_j) \neq \emptyset  \}.
\end{equation}
Finally, we define the weight function $\funcWeight: \setC \times \setC \mapsto \spaceR_{+}$ as
\begin{equation}
    \kappa(\c_u, \c_v)  = 1 / \min(\funcSD(\c_u), \funcSD(\c_v)) + \norm{\c_u - \c_v},
\end{equation}
which is a discretized approximation of our original objective~\eqref{eq:fop}. Having defined the graph, we now solve for the shortest weighted path, e.g., using A$^\star$, resulting in an optimized path in the graph according to 
\begin{equation}
    \tupleCorrWSP = [\ballCenterWSP_1, \hdots, \ballCenterWSP_m].
\end{equation}
\subsection{Path smoothing through convex optimization}
\label{sec:fp:scp}
\def\funcBZ{\bm{\beta}}
\def\NrSegments{m-1}
\def\setCAP{\mathcal{CAP}}
\def\NrBzCntrl{d}
\def\capRadial{r^{\text{cap}}}
\def\setLine{\mathcal{LS}}
\noindent
The resulting optimized path is nonsmooth and needs to be smoothened in order to be successfully tracked. Naturally, we also want the smoothened path to be close to our optimized path, such that it retains its optimized features. 
\\\\
First, we express a narrow region around the optimized path. For each neighboring pair, $(\ballCenterWSP_i, \ballCenterWSP_{i+1}), i \in \intRange_{1:\NrSPP-1}$, in the path, we form a line segment as
\begin{equation}
    \setLine_i = \{ (1-s) \cdot \ballCenterWSP_{i}  + s \cdot \ballCenterWSP_{i+1} \: | \: s \in [0, 1] \}.
\end{equation}
Our requirement of being close to the path can be translated as being within a radial distance $\capRadial_i \in \spaceR_+$ along each line-segment. This is the same as being within a capsule, which we form along each line-segment according to
\begin{equation}
\setCAP_i =   \setLine_i \oplus \{ \q \in \spaceR^{\nrDOF} \: | \: \norm{\q} < \capRadial_i \}, i \in \intRange_{1:\NrSPP-1}.
\end{equation}
\def\setS{\mathcal{S}}
\def\rInt{r^{\cap}}
\def\cInt{\c^{\cap}}
In the above, the operator $\oplus$ is the Minkowski sum. The resulting set is convex. To further guarantee that the capsule is collision-free, i.e., $\setCAP_i \subset \setCf$, we first define the intersecting sphere, $\setS$, between the boundaries of the neighboring balls. We define it as
\begin{equation}
 \setS(\cInt_i, \rInt_i) = \partial \setB(\ballCenterWSP_i) \cap \partial\setB(\ballCenterWSP_{i+1}), i \in \intRange_{1:\NrSPP-1},
\end{equation}
where $\cInt_i$ and $\rInt_i$ are the center point and radius, respectively. We then require that the capsule radius is smaller than the intersection, i.e., $\capRadial_i \le \rInt_i$. This guarantees that the capsule is inside the intersection and thus becomes collision-free. 
\\\\
Having expressed a suitable region that our smoothened path should stay within, we next want a suitable family of parameterized smooth curves, which are amenable to numerical optimization. For this, we choose B\'ezier curves, $\funcBZ : [0,1] \mapsto \spaceR^{\nrDOF}$, which are $d$-times differentiable, denoting its $k$-th derivative as $\funcBZ^{(k)}$. We now formulate the optimization problem to obtain a smooth path, which is
\begin{subequations}
\label{eq:smth}
\begin{align}
\underset{\funcBZ_1, \hdots, \funcBZ_{m-1}}{\operatorname{min}} \quad
&
\sum_{i=1}^{\NrSegments} \int_0^1 ||\funcBZ_i^{(2)}(s) ||^2 ds  \label{eq:smth:obj}
\\
    \textrm{s.t.} \quad & \funcBZ_i(s) \in \setCAP_i, \forall s \in [0, 1], &  \label{eq:smth:closeness} \\
                  \quad & \funcBZ_1(0) = \qStart, \: \funcBZ_{\NrSegments}(1) = \qGoal, &   \label{eq:smth:start_goal} \\
                  \quad & \funcBZ_j^{(k)}(1) = \funcBZ_{j+1}^{(k)}(0), &  \label{eq:smth:cont} \\
                  \quad & i \in \intRange_{1:\NrSegments}, \: j \in \intRange_{1:m-2}, k \in \intRange_{0:\NrBzCntrl}.  & \nonumber
\end{align}
\end{subequations}
In the above optimization problem, we have $\NrSegments$ B\'ezier curves as our decision variables. The objective \eqref{eq:smth:obj} is to yield a smooth curve, which we express by penalizing the second derivative. Collision-free guarantees and closeness to the original path are expressed by constraining the curves to lie in their corresponding capsules \eqref{eq:smth:closeness}. In $\eqref{eq:smth:start_goal}$ we enforce that the curves start and end in the given inputs. Finally, in \eqref{eq:smth:cont}, we enforce continuity between the neighboring curves. The optimization problem is convex, thus, we can solve it fast and reliably. Solving the optimization problem results in our path, which we time-scale, yielding our desired trajectory.
\section{Funnel control}
\label{sec:funnel_control}
This section derives a funnel control protocol that guarantees the tracking of $\funcTraj(t)$ while staying within the funnel, yielding predefined transient and steady-state performance. As in the previous section, we assume access to a differentiable SCDF $\funcSD$. In the experiments, Section~\ref{sec:exp:FP}, we will use the nSCDF as an approximation of the SCDF. \\\\
We define the error function as 
\begin{equation}
    \funcErr = (e_1,\dots,e_n) = \q - \funcTraj(t).
\end{equation}
and choose the funnel function as $\funcFunnel:[0,t_f]\mapsto\spaceR_{++}$, with
\begin{equation}
    \funcFunnel(t) = \funcSD(\funcTraj(t)).
\end{equation} 
Note that, for all $t\in[0, \timeFinal]$, since $\funcTraj(t) \in \setCf$, i.e., the trajectory is collision-free (see previous section), it holds that $\funcFunnel(t) > 0$. Furthermore, in view of the definition of $\funcNSD(\cdot)$ in Section~\ref{sec:scdf:neural_scdf}, $\funcFunnel(t)$ is continuous and continuous differentiable for almost all $t \in [0,\timeFinal]$. The control design aims to establish
\begin{equation} \label{eq:PPC distance}
    0 < \norm{\funcErr(t)} < \funcFunnel(t), \quad \forall t \in [0, \timeFinal], 
\end{equation}
which would guarantee collision-free navigation (see Figure~\ref{fig:intro}). Note that \eqref{eq:PPC distance} is equivalent to staying in the spherical collision-free regions $\setB$ defined in \eqref{eq:ball}. 
\\\\
To achieve \eqref{eq:PPC distance}, we employ a model-based version of the funnel control algorithm developed in \cite{kdf,verginis2019robust}. In particular, we define the normalized and transformed errors:
\begin{subequations} \label{eq:ksi and epsilon}
\begin{align} 
   \xi &= \frac{\norm{\funcErr}^2}{\funcFunnel(t)^2}, \label{eq:ksi}  \\
   {\varepsilon}(\xi) &= \ln\left( \frac{1}{1-\xi} \right). \label{eq:epsilon}
\end{align}
\end{subequations}
The control algorithm then aims to retain the boundedness of $\varepsilon(t)$, which guarantees that $|\xi(t)| < 1$ and hence \eqref{eq:PPC distance} is satisfied for all $t\in[0, \timeFinal]$. We next define a reference velocity profile as 
\begin{align*}
    \bm{v}_r(\xi,\funcErr) = - k_1 J_{\varepsilon}(\xi) \funcErr,
\end{align*}
where $J_{\varepsilon}(\xi) = \frac{\textup{d}\varepsilon(\xi)}{\textup{d}\xi} = \frac{1}{1-\xi} $, and $k_1$ is a positive gain. 
Subsequently, we follow a backstepping-like method and define the velocity errors 
\begin{align*}
    \bm{e}_v = (e_{v_1},\dots,e_{v_n})   = \qdot - \bm{v}_r(\xi,\funcErr),
\end{align*}
as well as positive and bounded functions $\rho_{v_i}(t)$ satisfying $ \rho_{v_i}(0) > |e_{v_i}(0)|$ for all $i \in \intRange_{1:\nrDOF}$. We define   
\begin{align*}
   \bm{\xi}_v &= (\xi_{v_1},\hdots,\xi_{v_n}) = \bm{\rho}_v(t)^{-1} \bm{e}_v, \\
   \bm{\varepsilon}_v(\bm{\xi}_v) &= (\varepsilon_{v_1},\hdots,\varepsilon_{v_n}),
\end{align*}
with $\varepsilon_{v_i} = \ln\left( \frac{1+\xi_{v_i}}{1-\xi_{v_i}} \right)$, $i\in \intRange_{1:\nrDOF}$, and $\bm{\rho}_v(t) = \funcDiag((\rho_{v_1}(t), \hdots, \rho_{v_n}(t)))$. We then design the model-based control law as 
\begin{align} \label{eq:control law}
    \u &=  \hat{\bm{C}}(\q,\qdot)\qdot + \hat{\bm{g}}(\q)
    - 
    \funnelVelocityGain \bm{\rho}_v(t)^{-1} \bm{J}_v(\bm{\xi}_v) \bm{\varepsilon_v}(\bm{\xi}_v), 
\end{align}
where $\bm{J}_v(\bm{\xi}_v) = \funcDiag(J_{v_i}(\xi_{v_i}) )$ and $J_{v_i}(\xi_{i_ v}) = \frac{\textup{d} \varepsilon_{v_i}  }{\textup{d} \xi_{v_i} } = \frac{ 2 }{1 - \xi_{v_i}^2 }$ for $i\in \intRange_{1:\nrDOF}$ and $k_2$ is a positive gain. The guarantees of the proposed control algorithm are given in the following theorem, whose proof can be found in Appendix~\ref{app:proof}.
\begin{theorem} \label{thm:ppc}
Consider a robot manipulator subject to the dynamics \eqref{eq:dyn_snd} and satisfying $ \norm{\funcErr(0)} < \funcFunnel(0)$ for a reference trajectory $\funcTraj(t)$ defined for $t \in [0, \timeFinal]$. The controller \eqref{eq:control law} guarantees that $ \norm{\funcErr(t)} < \funcFunnel(t)$ for all $t \in [0, \timeFinal]$.
\end{theorem}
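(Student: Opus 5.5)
The argument is the standard two-stage funnel (prescribed performance) proof of \cite{kdf,verginis2019robust}: a local existence result on an open set, followed by a contradiction argument showing that the solution never reaches the boundary of that set.

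\textbf{Step 1: local existence.} Stack the state as $\bm{z}=(\q,\qdot)$ and consider the open set
\begin{equation*}
\Omega=\{(\q,\qdot,t): \norm{\q-\funcTraj(t)}<\funcFunnel(t),\ |\xi_{v_i}|<1,\ i\in\intRange_{1:\nrDOF}\}.
\end{equation*}
Since $\norm{\funcErr(0)}<\funcFunnel(0)$ and $\rho_{v_i}(0)>|e_{v_i}(0)|$, the initial condition lies in $\Omega$. On $\Omega$ the closed-loop vector field obtained from \eqref{eq:dyn_snd} and \eqref{eq:control law} is locally Lipschitz in $\bm{z}$ and piecewise continuous in $t$. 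This uses that $\matMass$ is invertible and that $\funcFunnel$ is continuous and a.e.\ differentiable, so the closed loop is understood in the Carath\'eodory sense. Hence a unique maximal solution exists on $[0,\tau_{\max})$ and stays in $\Omega$ there. It follows that $\xi<1$ and $|\xi_{v_i}|<1$ on this interval, so $\varepsilon$ and $\bm{\varepsilon}_v$ are well defined.

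\textbf{Step 2: boundedness of $\varepsilon$.} Take $V_1=\tfrac12\varepsilon^2$. Differentiating,
\begin{equation*}
\dot\xi = 2\funcFunnel^{-2}\bigl(\funcErr^\top(\qdot-\dot{\funcTraj}) - \xi\funcFunnel\dot{\funcFunnel}\bigr).
\end{equation*}
Substituting $\qdot=\bm{v}_r+\bm{e}_v$ gives
\begin{equation*}
\dot V_1 = \varepsilon J_\varepsilon\,2\funcFunnel^{-2}\bigl(-k_1J_\varepsilon\norm{\funcErr}^2+\funcErr^\top\bm{e}_v-\funcErr^\top\dot{\funcTraj}-\xi\funcFunnel\dot{\funcFunnel}\bigr).
\end{equation*}
The terms $\funcErr^\top\bm{e}_v$, $\funcErr^\top\dot{\funcTraj}$ and $\xi\funcFunnel\dot{\funcFunnel}$ are bounded on $\Omega$ because $|e_{v_i}|<\rho_{v_i}$, $\norm{\funcErr}<\funcFunnel$, and $\funcTraj,\dot{\funcTraj},\funcFunnel,\dot{\funcFunnel}$ are bounded on the compact interval $[0,\timeFinal]$. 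As $\varepsilon\to\infty$ we have $\xi\to1$, so $\norm{\funcErr}^2\to\funcFunnel^2$ stays bounded away from zero and the negative term $-k_1J_\varepsilon\norm{\funcErr}^2$ dominates. Therefore $\dot V_1<0$ whenever $\varepsilon$ exceeds some finite $\bar\varepsilon$, which gives $\varepsilon(t)\le\bar\varepsilon$ on $[0,\tau_{\max})$. Consequently $\xi\le\bar\xi<1$, and $J_\varepsilon$, $\bm{v}_r$ and $\dot{\bm{v}}_r$ are bounded, using boundedness of $\qdot$ from the velocity-level constraint.

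\textbf{Step 3: boundedness of $\bm{\varepsilon}_v$.} Take $V_2=\tfrac12\bm{\varepsilon}_v^\top\bm{\varepsilon}_v$. We have $\dot{\bm{\xi}}_v=\bm{\rho}_v^{-1}(\qddot-\dot{\bm{v}}_r-\dot{\bm{\rho}}_v\bm{\xi}_v)$, and substituting $\qddot=\matMass^{-1}(\u-\matCor\qdot-\vecGrav-\dist)$ yields
\begin{equation*}
\dot V_2 = -k_2\,\bm{\varepsilon}_v^\top\bm{J}_v\bm{\rho}_v^{-1}\matMass^{-1}\bm{\rho}_v^{-1}\bm{J}_v\bm{\varepsilon}_v + \bm{\varepsilon}_v^\top\bm{J}_v\bm{\rho}_v^{-1}\bm{F},
\end{equation*}
where $\bm{F}$ collects the model mismatch $(\hat{\matCor}-\matCor)\qdot+\hat{\vecGrav}-\vecGrav$ (premultiplied by $\matMass^{-1}$), the disturbance $\dist$, $\dot{\bm{v}}_r$, and $\dot{\bm{\rho}}_v\bm{\xi}_v$. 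All of these are bounded on the solution because $\q$ and $\qdot$ lie in compact sets, $\dist$ is uniformly bounded, and Step 2 controls $\dot{\bm{v}}_r$. Using uniform positive definiteness of $\matMass^{-1}$ over the compact reachable set, let $\underline{m}$ denote a lower bound on its eigenvalues there. Then
\begin{equation*}
\dot V_2\le -k_2\underline{m}\norm{\bm{\rho}_v^{-1}\bm{J}_v\bm{\varepsilon}_v}^2+\bar F\norm{\bm{\rho}_v^{-1}\bm{J}_v\bm{\varepsilon}_v},
\end{equation*}
which is negative once $\norm{\bm{\rho}_v^{-1}\bm{J}_v\bm{\varepsilon}_v}$ is large. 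Since $\norm{\bm{\rho}_v^{-1}\bm{J}_v\bm{\varepsilon}_v}$ grows unboundedly with $\norm{\bm{\varepsilon}_v}$ ($J_{v_i}\ge2$ and $\rho_{v_i}$ is bounded), this shows $\bm{\varepsilon}_v$ is bounded. Hence $|\xi_{v_i}|\le\bar\xi_v<1$.

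\textbf{Step 4: conclusion.} The solution therefore remains in a compact subset of $\Omega$. If $\tau_{\max}<\timeFinal$, the maximal solution would have to leave every compact subset of $\Omega$, which is a contradiction. So $\tau_{\max}\ge\timeFinal$, and in particular $\norm{\funcErr(t)}<\funcFunnel(t)$ on $[0,\timeFinal]$.

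The main obstacle is the circularity in bounding $\bm{F}$ and $\dot{\bm{v}}_r$: these bounds require $\qdot$ and $\dot\xi$ to be bounded, which is only available because $|\xi_{v_i}|<1$ holds on $\Omega$. A secondary difficulty is that $\funcFunnel=\funcSD\circ\funcTraj$ is only a.e.\ differentiable, which forces the Carath\'eodory solution framework and an essential-sup bound on $\dot{\funcFunnel}$.
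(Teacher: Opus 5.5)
Your proposal is correct and follows the paper's proof essentially step for step. It uses the same open set $\Omega$ defined by $\xi<1$ and $|\xi_{v_i}|<1$ and the same local existence of a maximal solution. It then bounds $\varepsilon$ to get boundedness of $\bm{e}$, $\q$, $\qdot$, $\bm{v}_r$ and $\dot{\bm{v}}_r$, bounds $\tfrac12\norm{\bm{\varepsilon}_v}^2$ using positive definiteness of $\matMass$ and $J_{v_i}\ge 2$, and rules out escape via confinement to a compact subset of $\Omega$. The differences are cosmetic: you use $\tfrac12\varepsilon^2$ with a direct dominance argument based on $\min_t \funcFunnel(t)>0$, where the paper bounds $\dot\varepsilon$ via $J_\varepsilon(\xi)\sqrt{\xi}\ge\varepsilon$, and you conclude $\tau_{\max}\ge t_f$ instead of the paper's $t_{\max}=\infty$.
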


\section{Numerical experiments}
\begin{figure*}[t!]
    \centering
    \begin{tikzpicture}
        \node[anchor=south west,inner sep=0] at (0,0) {\includegraphics[width=0.45\textwidth, trim={0cm 0cm 0cm 2cm}, clip]{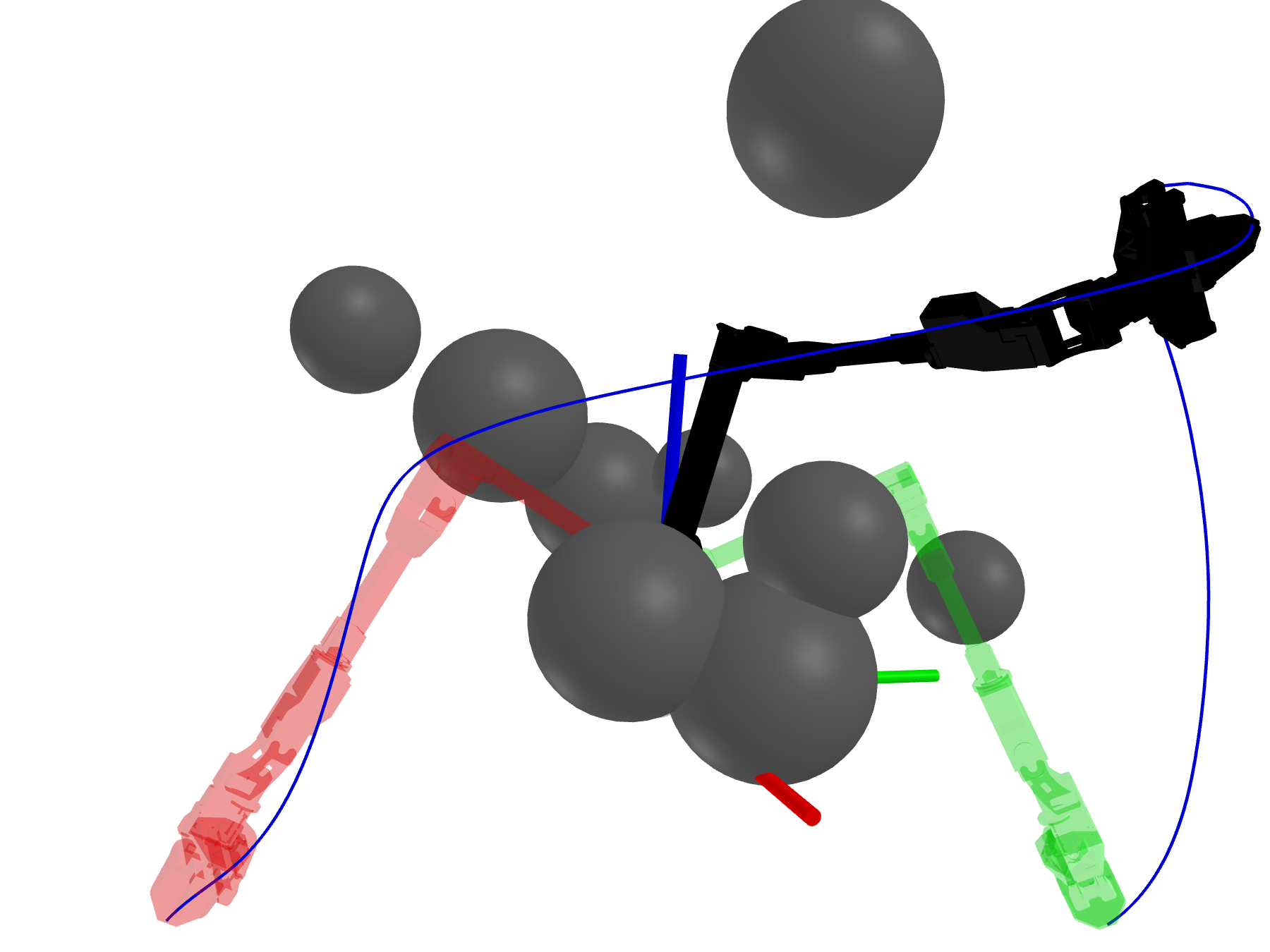}};
        \node[anchor=south west,inner sep=0] at (0.5\textwidth,0) {\includegraphics[width=0.45\textwidth, trim={2cm 0 2cm 2cm}, clip]{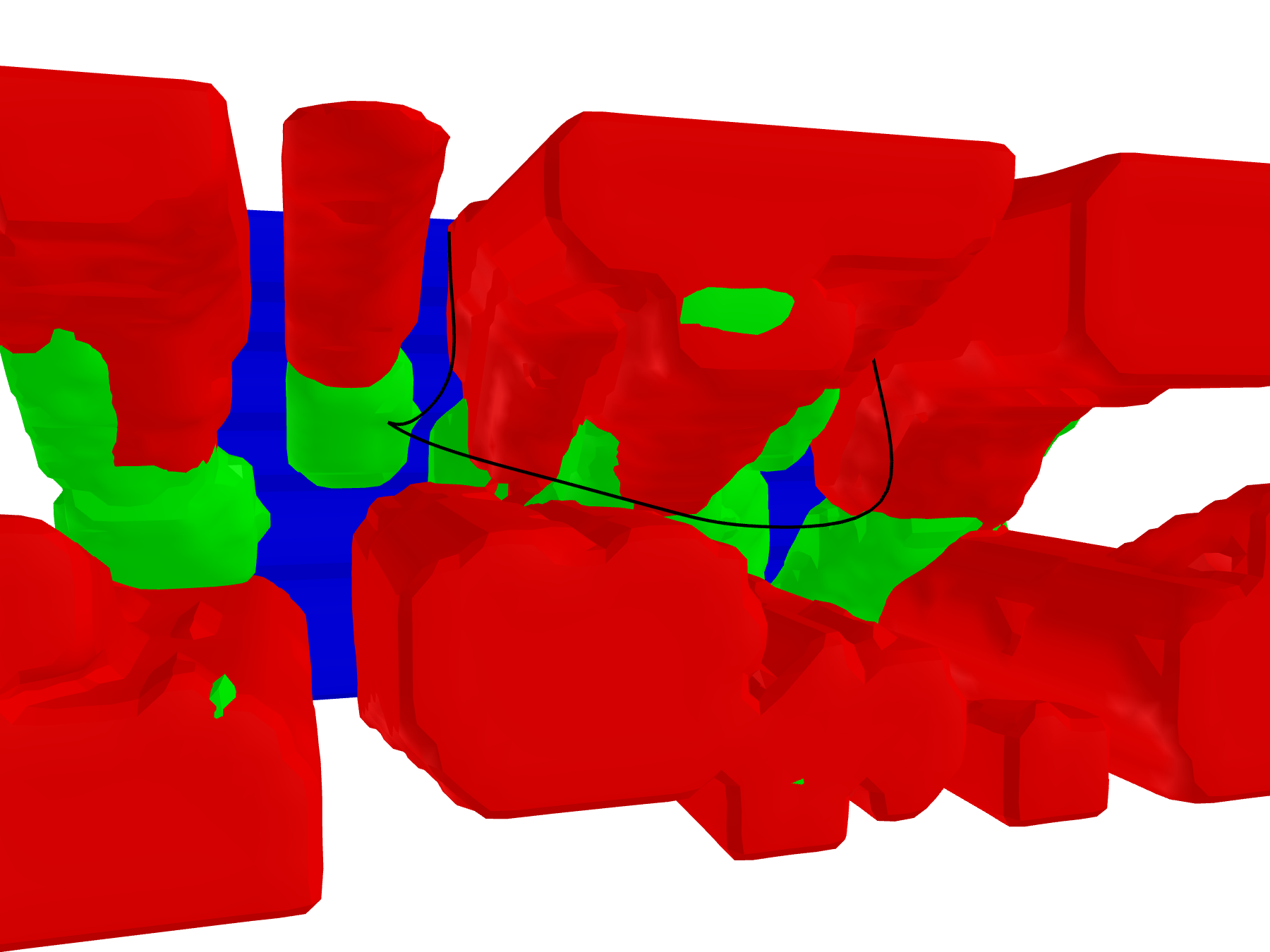}};
    \end{tikzpicture}
    \caption{
    \textbf{Left:} Illustration in world space, showing an example problem instance used in the experiments. The manipulator is 6 DOF colored black at a configuration along a collision-free path. The end-effector position is traced out along the path (blue curve). The start and goal configurations are colored transparent red and green, respectively. The obstacles are spheres (colored gray).
    \textbf{Right:} The learned configuration space representation of the world instance to the left, illustrated in the sub-space of the first three joints, i.e., main joints. The red, green and blue colored regions are the obstacle spaces for the main joints, wrist joints and self-collisions. The resulting path from our method is the black curve.
    }
    \label{fig:exp:demo}
\end{figure*}
\def\funcNSDsc{\funcNSD^{sc}}
The following section presents the numerical experiments conducted with a ViperX-300 \cite{viper_robot} 6 DOF robot, $\nrDOF=6$, which is illustrated in the left part of Figure~\ref{fig:exp:demo}.
\\\\
In the following we approximate the SCDF with the nSCDF, i.e., $\funcSD\approx\funcNSD$. For our considered robot, we also have to include an nSCDF for self-collisions. Thus, we define the nSCDF according to
\begin{equation}
    \label{eq:nSCDF_complete}
    \funcNSD(\q, \geom) = \min(\funcNSDmain(\q_m, \geom), \funcNSDwrist(\q_m, \q_w, \geom), \funcNSDsc(\q)),
\end{equation}
where $\funcNSDsc$ is the nSCDF for self-collisions.
\\\\
In Section \ref{sec:exp:SCDF}, we present how we learn and validate the nSCDF in \eqref{eq:nSCDF_complete}. In Section \ref{sec:exp:FP}, we focus on funnel planning. We conclude with Section~\ref{sec:exp:FC}, where we show the benefits of using the inherit funnel provided by the nSCDF for funnel control. In particular, we show that we get better motion performance, i.e., lower jerk, and robustness properties, i.e., we can withstand larger disturbances.
%

\subsection{Neural representation}
\label{sec:exp:SCDF}
\subsubsection{Obstacle representation}
As a first step to learn an nSCDF, we have to decide on a parameterized obstacle representation. We use spheres, as they are flexible in their representation of generic obstacles \cite{hubbard_spheres}. Thus, our geometry vector is defined as
\begin{equation}
    \geom = (\p, r) \in \setW \times \spaceR_{++},
\end{equation}
where $\p$ defines the center point of the sphere and $r$ its radius.
\subsubsection{Symmetries}
\def\RotMat{\bm{R}}
The first joint of our manipulator has a rotation around the $z$-axis. If we rotate both the manipulator and the obstacle by $\Delta \theta$ around the $z$-axis, then the world space distance does not change. This fact also applies to the nSCDF, which we express as
\begin{equation}
    \funcNSD(\q, (\p, r)) = \funcNSD((\theta_1 + \Delta\theta, \q_{2:6}), (\RotMat_z(\Delta\theta) \p, r)).
\end{equation}
In the above, $\RotMat_z(\theta)$, represents the rotation matrix around the $z$-axis. We exploit this relationship in the following data generation process, by only generating obstacles in the $xz$-plane. Hence, when representing the position of the obstacle, we only specify the $xz$-components. This reduces our input dimension and simplifies the learning.
\subsubsection{Data generation and learning}
The data generation and learning process for $\funcNSDmain$, $\funcNSDwristLocal$ and $\funcNSDsc$ each followed \cite{pbrm}. For the wrist distance function, $\funcNSDwrist$, we discretized each main joint into 50 equally spaced points between the joint limits, resulting in $N=50^3$ grid points. We then generated 2000 random obstacles in the world space and used Algorithm~\ref{alg:dist} to compute the signed configuration distances. When sampling the configurations, we biased the sampling according to the following configuration distance intervals.
\begin{itemize}
    \item \textbf{Zero crossing interval:} $[-0.05, 0.05] \: rad$.
    \item \textbf{Operating interval:} $[0.05, 1.0]\:rad$.
\end{itemize}
The sampling was done with proportions $1/3$ and $2/3$, respectively. The first interval (zero crossing) is motivated by our need to use the nSCDF as a collision detector. This requires a good description of when the sign change occurs. The second interval is selected based on our observation that once the nSCDF is used with multiple obstacles, the min operator \eqref{eq:scdf:min} filters away large distances, making it unnecessary to learn large distances well. Thus, the operating interval is the region in which learning an accurate representation is important. For each generated obstacle, we sampled 3000 configurations according to the specified intervals and proportions. The rest of the learning followed \cite{pbrm}.
\subsubsection{Learning validation setup}
Having trained all the networks required to form the nSCDF in \eqref{eq:nSCDF_complete}, we next validated its ability to form collision-free balls, i.e., property \eqref{eq:ball}. We first generated 100 unseen obstacles. For each obstacle, we sampled 100 configurations that had predicted distances inside the interval $[0.0, 1.0] \: rad$. For each sampled point, we sampled another $10^4$ points within the corresponding ball. The points sampled within the ball were used to certify the ball's collision-free volume by collision checking all the points with a conventional collision checker~\cite{trimesh} using the ground-truth geometries for the obstacle. We then computed the fraction of collision-free points within the ball. Ideally, all points should be collision-free. If, however, there were any collisions, then we recomputed the distance by taking the smallest distance from the center point to all points in collisions, which results in a smaller radius.
\subsubsection{Learning validation results}
From the experiments, we observed that $6\%$ of the balls contained points in collisions. The statistics of the overestimation errors are reported in Table~\ref{tab:exp:ball_ver}.
\begin{table}[t]
\centering
\caption{
Statistics of the overestimation in the ball radius. In total, $10^4$ balls were used. A percentile is denoted with a $Q_p$, where $p$ is the used percentile. All values are in radians.
}
\label{tab:exp:ball_ver}
\begin{tabular}{cccccc}
mean &
$Q_{95}$ &
$Q_{99}$ &
$Q_{99.9}$ &
max
\\
\hline
0.001 & 0.005  & 0.033  & 0.061  & 0.141  \\
\end{tabular}
\end{table}
Observing the results, we notice that the average error is low. The upper percentiles indicate that spread is limited, with $99.9\%$ of the balls having an error that is less than $0.061\:rad$. To account for the possibility of overestimation, we use a conservative safety distance margin of $0.061\:rad$ in the following experiments.
\subsection{Funnel planning}
\label{sec:exp:FP}
\begin{figure*}[t!]
    \centering
    \begin{tikzpicture}
        \node[anchor=south west,inner sep=0] at (0.0, 0.0) {\includegraphics[height=4.5cm]{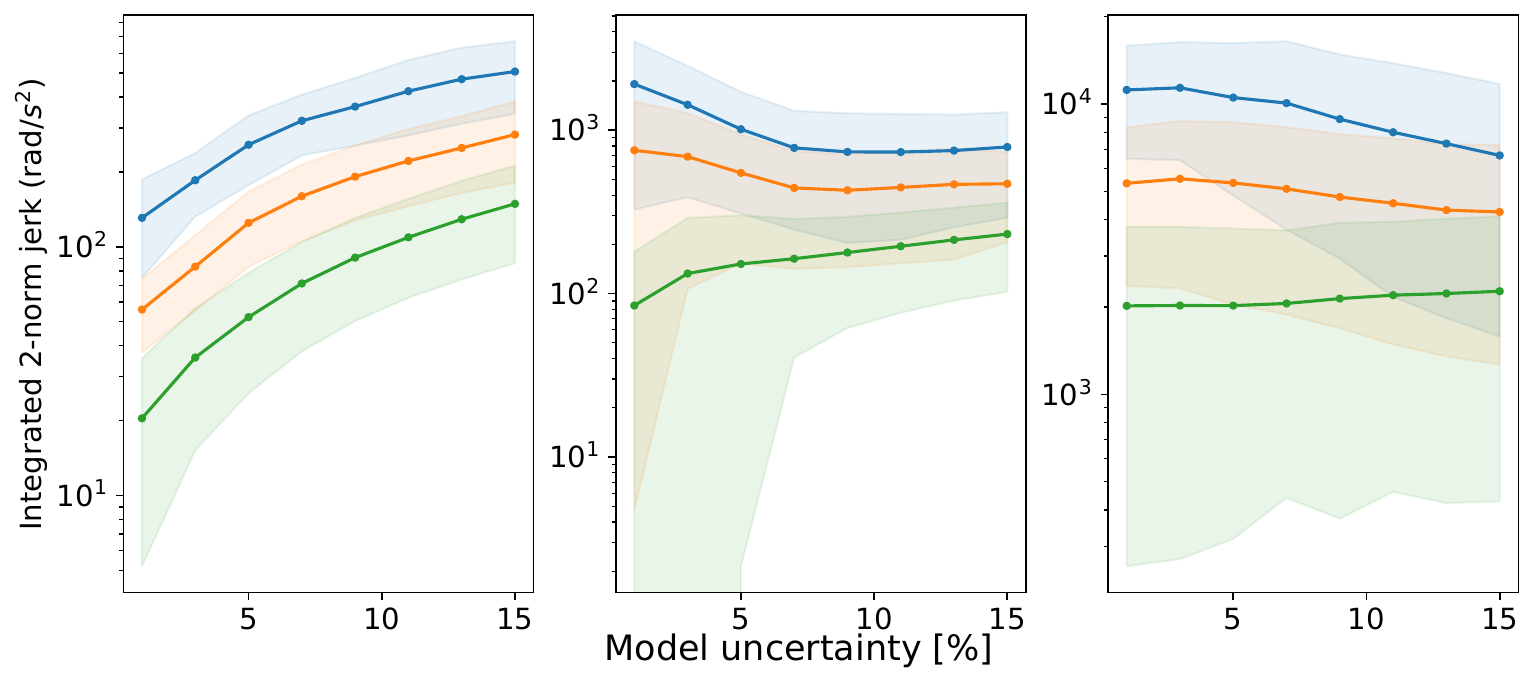}};
        \node[draw] at (2.5, 1) {$\alpha=10^{-3}$};
        \node[draw] at (5.75, 1) {$\alpha=10^{-2}$};
        \node[draw] at (9, 1) {$\alpha=10^{-1}$};
        \node[anchor=south west,inner sep=0] at (11, 0.0) {\includegraphics[height=4.5cm]{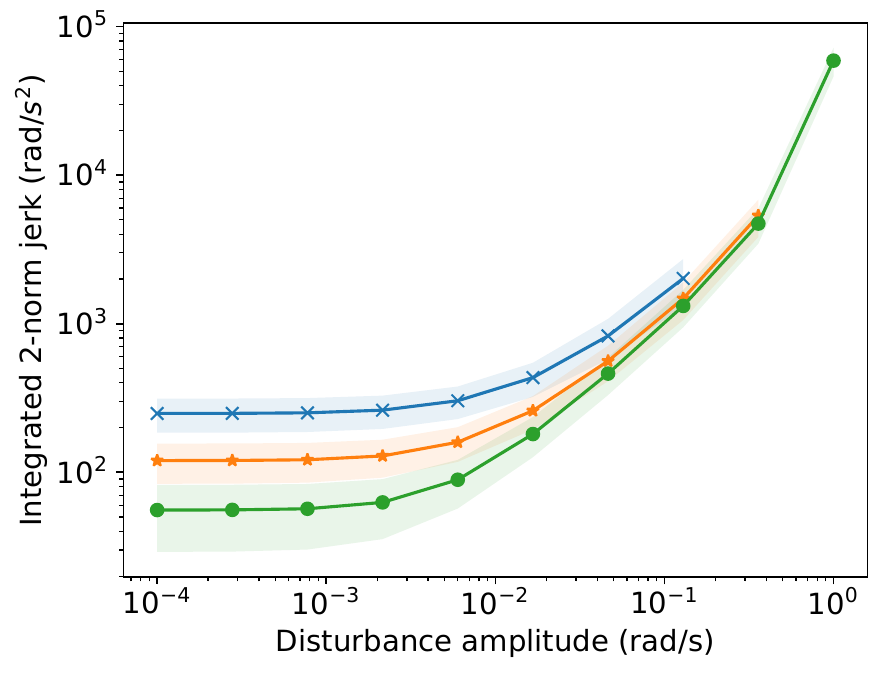}};
    \end{tikzpicture}
    \caption{
    \textbf{All:}
    The vertical axis presents the integrated norm of the resulting jerk. Each data point is the average from 10 simulations in different funnels. If a run failed, then the average was not formed and no data point was added to the graph. The shaded region corresponds to $\pm$ one standard deviation. Funnel sizes: $0.10\:rad$~(blue), $0.15\:rad$~(orange) and with the nSCDF~(green).
    \textbf{Left:}
    The horizontal axis represents different levels of model uncertainties.
    \textbf{Right:}
    The horizontal axis represents different amplitudes on the disturbance signal.
    }
    \label{fig:exp}
\end{figure*}
\noindent
\subsubsection{Setup}
We generate 10 random world instances, each of which contains 10 random obstacles in the world space. At this point, no re-training is performed, and the obstacles were not seen during training. An example world instance is given in the left part of Figure \ref{fig:exp:demo}. For each world instance, 10 random start and goal pairs that cannot be connected with a collision-free straight line were sampled. In total, 100 problem instances were used for evaluation. The performance is evaluated based on path length, clearance and time to compute a path. Clearance is defined as the average distance along the path, formed by discretizing the path evenly into 100 points.
\subsubsection{Planners}
We use the nSCDF for all stages of our presented method. To make sure that we do not plan a funnel that extends beyond the joint limits, we additionally include the signed distance to the joint-limit boundaries when evaluation \eqref{eq:nSCDF_complete}. As a benchmark, we use the Extended Free Space Sampling (EFSS) method from \cite{kdf}. In particular, we use a fixed ball size of $0.1\:rad$, which is validated by $100$ samples using a conventional collision checker \cite{trimesh}. To find a collision-free path, we use a bi-directional RRT planner~\cite{rrt_connect}. All paths from both methods are verified by using a conventional collision checker.
\subsubsection{Results}
The results are presented in Table \ref{tab:exp:pp}.
\begin{table}[t]
\centering
\caption{Statistics from 100 problem instances. Performance is measured based on runtime (R.time), path length (P.len) and clearance (Clear.). The format of the data is average $\pm$ standard deviation. The EFSS method is not able to give any clearance beyond the radius used to validate the region. All paths were collision-free.
}
\label{tab:exp:pp}
\begin{tabular}{l|ccc}
\textbf{Path planning} & R.time $(ms)$ & P.len  $(rad)$ & Clear.  $(rad)$ \\ 
\hline

RRT + EFSS \cite{kdf} & $ 2006 \pm  5545$ & $ 11.60 \pm  4$ & $ 0.10 \pm  0.00$ \\
RRT + nSCDF (ours) & $ 23 \pm  162$ & $ 10.27 \pm  3$ & $ 0.36 \pm  0.08$ \\
\textbf{Method steps} & \multicolumn{3}{l}{} \\ 
\hline
Projection & $ 618 \pm  197$ & $ 11.59 \pm  3$ & $ 0.52 \pm  0.10$ \\
WSP & $ 3 \pm  1$ & $ 9.65 \pm  3$ & $ 0.50 \pm  0.10$ \\
Smoothing & $ 167 \pm  146$ & $ 9.43 \pm  3$ & $ 0.49 \pm  0.10$ \\
\end{tabular}
\end{table}
All produced paths were collision-free. We start by comparing the different path planning approaches. Compared to EFSS, finding a path is more than 80 times faster by using the nSCDF and we get roughly $3$ times more clearance. Next, we inspect our proposed method steps. Projecting the path onto the medial axis is the most time consuming part. However, it pays off, clearance is increased by more than $40~\%$, but at the same time, we see an increase in path length. Solving the WSP problem results in improved path length while more or less maintaining the clearance at a minor computational cost. The final smoothing step makes minor reductions to path length and clearance, while giving the necessary smoothness that is needed for time-scaling. Overall, our complete pipeline takes on average roughly $1\:s$, i.e., we are on average twice as fast compared to the EFSS method. We also end up with roughly $5$ times the clearance and $20\:\%$ lower path length. It should further be noted that the produced path from the EFSS method is in need of post-processing, as it is nonsmooth and, as seen from the path length, contains overly long motion. Adding post-processing would add even more computation time, which further demonstrates the efficiency of our method.
\subsection{Funnel Control}
\label{sec:exp:FC}
\subsubsection{Setup}
Our funnels are constructed from $10$ random world space instances, using the same setup from Section~\ref{sec:exp:FP}. Each funnel is time-scaled using~\cite{toppra}, limiting the velocity and acceleration to $0.5~rad/s$ and $0.5~rad/s^2$, respectively. To show the benefits of having a large funnel, we consider, in addition to using the nSCDF, two fixed sized funnels, with radius $0.10\:rad$ and $0.15\:rad$, respectively. We simulate the robot dynamics \eqref{eq:dyn_snd}. We use $\rho_{v_i}(t)=1$ for all $i \in \intRange_{1:\nrDOF}$, which satisfies the requirement $\rho_{v_i}(0) > |e_{v_i}(0)|$, and set $\funnelGain=\funnelVelocityGain=1$. Performance is measured based on the level of disturbance the funnel can withstand, considering both model and external disturbances. We also consider the motion performance, by quantifying the resulting jerk through $\int_0^{\timeFinal} \norm{\dddot{\q}(t)} dt$. We simulate two scenarios, which we now discuss. 
\\\\
In the first scenario we study the combined effect of model uncertainties and external disturbances. Model uncertainty is simulated by a biased estimate of the link masses, which affects all parts of the robot dynamics. We vary the level of uncertainty, from $5\%$ to $15\%$. The external disturbance is modeled as a sinusoidal, $\dist(t) = \alpha \cdot (10^{-1},1, 1, 10^{-1}, 10^{-1}, 10^{-3}) \cdot \sin(10 \: t)$, where the amplitude is varied using $\alpha \in \{10^{-3}, 10^{-2}, 10^{-1}\}$.
\\\\
In the second scenario, we consider a more practical scenario and simulate a digital control implementation, applying a piecewise-constant control input, updated every $1\:ms$. The uncertainty on the mass links are all set to $5\%$. We add a disturbance to our measured velocity, applying a sinusoid according to $A \cdot \sin(20 \pi \cdot t)$. The amplitude, $A$, is varied by considering $10$ values, logarithmically spaced between $10^{-4}$ and $10^{0}$. If the controller failed in any of the $10$ funnels, then the results were disregarded for that disturbance value, since an accurate statistics could not be formed. 
\subsubsection{Results}
The results from both scenarios are presented in Figure~\ref{fig:exp}. Starting with the first scenario, illustrated in the left part of the figure, we see a clear trend: increasing the funnel width lowers the jerk. In the second scenario, illustrated in the right part of Figure~\ref{fig:exp}, we observe the same trend. For lower levels of disturbance, we see a distinct difference in jerk performance. The smallest fixed sized funnel is performing the worst and using the nSCDF results in the best performance, since lower jerk is achieved, while it can continue to operate under higher levels of disturbances.
\section{Conclusions}
Funnel control is an attractive approach for control of manipulators with uncertain dynamics and disturbances. However, current state-of-the-art for funnel planning is computationally expensive, typically resulting in narrow funnels to reduce computation, which can have a negative effect on the motion performance and robustness margins. We approached the problem by first introducing a novel nSCDF for 6 DOF manipulators, which can process parameterizable obstacle shapes. As a next step, the nSCDF was integrated into both the funnel planning and control. Our funnel planner constructed paths, where we balanced both path length and clearance. It consisted of three steps: approximate medial-axis projection for higher clearance, weighted shortest path search to balance path length and clearance, and, finally, path smoothening. Subsequently, we presented a novel funnel control formulation, which utilized the geometrical shape of the nSCDF. In the numerical experiments, we observed that the funnel planning was greatly simplified by using the nSCDF, producing funnels twice as fast compared to our benchmark, while gaining $5$ times more clearance and $20 \%$ lower path length. In the funnel control experiments, we observed that larger funnels provide two key benefits: lower jerk and robustness to higher amplitudes of the disturbance.
\bibliographystyle{ieeetr}
\bibliography{main}
\appendix
\subsection{Proof of Theorem \ref{thm:ppc}}
\label{app:proof}
\begin{proof} 
The proof follows closely the proof of \cite{kdf,verginis2019robust}. 
We set $\Omega = \{(\bm{x},t) \in \mathbb{R}^{2n} \times [0,t_f): [\xi,\bm{\xi}_v^\top]^\top \in (-1,1)^{n+1} \}$ which is open and satisfies $ (\bm{x}(0),0)\in \Omega$. It can be verified that the right-hand-side of the closed-loop system obtained from \eqref{eq:dyn_snd} and \eqref{eq:control law} is locally Lipschitz in $\bm{x}$ and continuous in $t$ over $\Omega$. Hence, according to  \cite[Theorem 2.1.1]{bressan2007introduction}, we conclude the existence of a maximal solution $\bm{x}(t)$ that satisfies $(\bm{x}(t),t) \in \Omega$ for all $t\in[0,t_{\max})$, with $t_{\max} > 0$. 
What remains to be proven is $t_{\max} = \infty > t_f$. To that end, consider first the time derivative of $\varepsilon$ as well as $\qdot = \bm{v}_r(\xi,t) + \bm{e}_v$:
\begin{align*}
    \dot{\varepsilon} =& 2 J_\varepsilon(\xi)\rho(t)^{-2}\left( \funcErr^\top (\qdot - \dot{\funcTraj}(t))  -  \rho(t) \dot{\rho}(t) \xi \  \right)  \\
    =&  - 2 k_1 J_\varepsilon(\xi)^2 \rho(t)^{-2} \|\funcErr\|^2 + 2 J_\varepsilon(\xi)\rho(t)^{-2} \bm{F}_1 
\end{align*}
where $\bm{F}_1 =  \funcErr^\top(\bm{e}_v   - \dot{\funcTraj}(t)) -  \rho(t) \dot{\rho}(t) \xi$. By employing the positivity of $\rho(t)$, the boundedness of $\rho(t)$, $\dot{\rho}(t)$, and \eqref{eq:ksi}, and the fact that $| e_{v_i}(t)| \leq \rho_{v_i}(t)$, $\xi \leq \sqrt{\xi}$ (since $\xi < 1$ in $\Omega$) for all $t\in[0,t_{\max})$, we conclude that 
\small
\begin{align*}
    \dot{\varepsilon} \leq - \mu_1 k_1 J_\varepsilon(\xi)^2 \xi + \mu_2 J_\varepsilon(\xi) \xi 
    \leq   - \mu_1 k_1 J_\varepsilon(\xi)^2 \xi + \mu_2 J_\varepsilon(\xi) \sqrt{\xi}
\end{align*}
\normalsize
for all $t\in[0,t_{\max})$ and positive constants $\mu_1$ and $\mu_2$. Therefore, $\dot{\varepsilon} < 0$ when $J_\varepsilon(\xi) \sqrt{\xi} > \frac{\mu_2}{k_1\mu_1}$. According to Proposition 1 of \cite{verginis2019robust} and in view of $\varepsilon \geq 0$, it holds that $J_\varepsilon(\xi) \sqrt{\xi} = \sqrt{J_\varepsilon(\xi)^2 \xi} = \sqrt{\exp(\varepsilon)(\exp(\varepsilon) - 1)} \geq \varepsilon$, which implies that $\dot{\varepsilon} < 0$ when $\varepsilon > \frac{\mu_2}{k_1\mu_1}$. 
Therefore, according to \cite{khalil_nonlinear_systems}, we conclude the boundedness of $\varepsilon$ as $\varepsilon(\xi(t)) \leq \bar{\varepsilon} = \max\left\{ \varepsilon(0), \frac{\mu_2}{k_1\mu_1} \right\}$ for all $t\in[0,t_{\max})$, which, in view of \eqref{eq:epsilon}, implies that $\xi(t) \leq \bar{\xi} = {\exp(\bar{\epsilon}) - 1}{\exp(\bar{\epsilon})} < 1$ and $J_\varepsilon(\xi) \leq J_\varepsilon(\bar{\xi}) = \frac{1}{1-\bar{\xi}}$, for all $t\in[0,t_{\max})$. 
Consequently, we conclude the boundedness of the following signals: $\|\funcErr(t)\| \leq \bar{e} = \sqrt{ \bar{\xi}(t) \max_{t\in[0,t_f]}\{\rho(t)}\}$, $\|\q(t)\| \leq \bar{e} + \max_{t\in[0,t_f]}\{\|\funcTraj(t)\|\}$,
$\|\bm{v}_r(\xi(t),t)\| \leq \bar{v}_v = k_1 J_\varepsilon(\bar{\xi}) \bar{e}$, and $\|\qdot(t)\| \leq  \bar{v}_r + \max_{t\in[0,t_f]}\{\|\rho_{v}(t)\|\}\sqrt{n}$ for all $t\in[0,t_{\max})$. Finally, by differentiation $\bm{v}_r(\xi,\funcErr)$ and using the boundedness of the aforementioned signals, we also conclude the boundedness of $\dot{\bm{v}}_r(\xi(t),\funcErr(t))$ for all $t\in[0,t_{\max})$.
Consider next the function $V = \frac{1}{2}\|\bm{\varepsilon}_v\|^2$, whose time derivative is 
\begin{align*}
    \dot{V} =& \bm{\varepsilon}_v^\top \bm{J}_v(\bm{\varepsilon}_v) \bm{\rho}_v(t)^{-1} \big( \bm{M}(\q)^{-1} (\bm{u} - \bm{C}(\q,\qdot)\qdot - \bm{g}(\q) \\
    &- \bm{w}(\q,\qdot,t)) - \dot{\bm{v}}_r(\xi,\funcErr) - \dot{\bm{\rho}}_v(t)\bm{\xi}_v) \big)
\end{align*}

By invoking the positivity of $\bm{\rho}_v(t)$, the boundedness of $\dot{\bm{\rho}}_v(t)$, $\q(t)$, $\qdot(t)$, $\dot{\bm{v}}_r(\xi(t),\funcErr(t))$, $\bm{\xi}_v(t)$, the continuity of $\bm{C}(\cdot)$, $\bm{g}(\cdot)$, $\bm{w}(\cdot,t)$, $\hat{\bm{C}}(\cdot)$, $\hat{\bm{g}}(\cdot)$,  the boundedness of $\bm{w}(\q,\qdot,\cdot)$, and the positive definiteness of $\bm{M}(\q)$, we conclude that 

\begin{align*}
    \dot{V} \leq -k_2 \mu_3 \| \bm{J}_v(\bm{\varepsilon}_v) \bm{\varepsilon}_v \|^2 +  \mu_4 \| \bm{J}_v(\bm{\varepsilon}_v) \bm{\varepsilon}_v \| 
\end{align*}
for all $t\in[0,t_{\max})$ and positive constants $\mu_3$, $\mu_4$. Hence, we conclude that $\dot{V} < 0$ when $\| \bm{J}_v(\bm{\varepsilon}_v) \bm{\varepsilon}_v \|  > \frac{\mu_4}{k_2 \mu_3}$, which, by exploiting the fact that $\bm{J}_{v_i}(\bm{\xi}_{v_i}(t)) \geq 2$, for all $t\in[0,t_{\max})$ and $i\in\{1,\dots,n\}$, implies the boundedness of $\bm{\varepsilon}_v$ as $\|\bm{\varepsilon}_v(\bm{\xi}_v(t))\| \leq \bar{\varepsilon}_v = \max\{ \| \bm{\varepsilon}_v(\bm{\xi}_v(0)) \|, \frac{\mu_4}{2k_2 \mu_3} \}$ and hence 
$|\xi_{v_i}(t)| \leq \frac{\exp(\bar{\varepsilon}_v)-1}{\exp(\bar{\varepsilon}_v)+1} < 1$, 
for all $t\in[0,t_{\max})$ and $i\in\{1,\dots,n\}$. Consequently, the control law \eqref{eq:control law} and the closed-loop signals remain bounded for all $t\in[0,t_{\max})$. 

Finally, the bounds on $\xi(t)$ and $|\xi_{v_i}(t)|$ imply the containment of $(\bm{x}(t),t)$ in a compact subset of $\Omega$. Hence, by invoking a forward-completeness result (e.g., \cite[Theorem 2.1.4]{bressan2007introduction}), we conclude that $t_{\max} = \infty$, concluding the proof.
\end{proof}

\end{document}